\documentclass[11pt]{article}

\usepackage[preprint]{acl}

\usepackage{times}
\usepackage{latexsym}

\usepackage[T1]{fontenc}
\usepackage[utf8]{inputenc}

\usepackage{microtype}

\usepackage{inconsolata}

\usepackage{amsmath}
\usepackage{amssymb}
\usepackage{amsthm}
\usepackage{graphicx}
\usepackage{amsfonts}
\usepackage[linesnumbered,ruled,vlined]{algorithm2e}
\usepackage{enumitem}
\usepackage{tabularx}
\usepackage{booktabs}
\usepackage{multirow}
\usepackage[table]{xcolor}
\usepackage{hyperref}
\usepackage{cleveref}
\usepackage{url}
\usepackage{tcolorbox}
\usepackage{pifont}

\newtheorem{theorem}{Theorem}
\newtheorem{lemma}{Lemma}
\newtheorem{definition}{Definition}

\newcommand{\system}{SHIP\xspace} % definition of SHIP

\definecolor{ForestGreen}{HTML}{188038}
\definecolor{BrickRed}{HTML}{D93025}
\newcommand{\cmark}{\textcolor{ForestGreen}{\ding{51}}}
\newcommand{\xmark}{\textcolor{BrickRed}{\ding{55}}}

\title{SHIP: Structured and Hierarchical Insight with Privacy}

\author{
  \textbf{Kai Chen}\textsuperscript{1}\thanks{This work was primarily done during an internship at Google.},
  \textbf{Guisi Ni}\textsuperscript{2},
  \textbf{Albert Cheu}\textsuperscript{2},
  \textbf{Kassem Fawaz}\textsuperscript{2}, \\
  \textbf{Yuzheng Hu}\textsuperscript{2},
  \textbf{Ryan McKenna}\textsuperscript{2},
  \textbf{Peter Kairouz}\textsuperscript{2} \\
  \textsuperscript{1}University of Virginia \quad
  \textsuperscript{2}Google Research \\
  \texttt{kaichen@virginia.edu} \\
  \texttt{\{guisin,cheu,kfawaz,yuzhenghu,mckennar,kairouz\}@google.com}
}

\begin{document}
\maketitle
\begin{abstract}
Extracting insights from textual data while protecting user privacy is increasingly important. Existing methods produce private insights in diverse forms for different analytical purposes, making their utility difficult to assess and compare. In this work, we formulate differentially private (DP) insights as reusable representations for answering analytical queries, providing a common basis for utility assessment. Answering these queries, however, requires fine-grained details and cross-attribute statistics that existing representations may omit. To capture this information, we introduce Structured and Hierarchical Insight with Privacy (\system), which automatically maps queries to attributes, builds a hierarchical hybrid taxonomy, and releases a DP synthetic table for further analysis. We also develop an evaluation protocol based primarily on QA performance and structured fidelity. Our experiments demonstrate \system's strong performance on both measures relative to existing private insight baselines, while identifying queries that remain challenging for structured representations.
\end{abstract}

% main paper body 

\section{Introduction} 

Modern organizations collect vast amounts of textual data from their users. In particular, the rapid development of generative AI has introduced new user-facing tools, such as chatbots and AI agents, that continuously generate large volumes of data. Analyzing these data is crucial for organizations to understand user needs, improve their products and models, and guide decision-making. However, these data often contain personal information, making privacy protection essential for such analysis.

Various insight extraction systems have been developed to analyze such textual data. For example, Anthropic proposes Clio \citep{tamkin2024clioprivacypreservinginsightsrealworld}, which transforms large collections of AI-assistant conversations into extracted facets, hierarchical semantic clusters, and cluster-level summaries. Google's Urania \citep{liu2025uraniadifferentiallyprivateinsights} extends this paradigm with formal end-to-end DP guarantees, offering stronger privacy protection against adversarial attacks \citep{annamalai2026cliopatra}. Moreover, OpenAI \citep{chatterji2025chatgpt} and Google \citep{cheu2025provablyprivateanalyticsinsights} have explored structured approaches to extracting insights. Google's approach additionally provides formal DP guarantees through DP aggregation, with LLM-based structured summarization in trusted execution environments. Several other works further investigate applications of insights across various domains such as healthcare, science, and generative AI usage analysis \citep{codreanu2026aiinscience,anthropic2025affective,anthropic2026healthcare,costagomes2026copilothealth}.

These systems produce insights in different forms, reflecting different analytical purposes. 
Clio and Urania support exploratory analysis with limited prior specification, while structured approaches organize information along targeted attributes.
Existing evaluations are often tailored to individual methods' purposes and output formats, leaving no common basis for comparing analytical utility across methods \citep{tamkin2024clioprivacypreservinginsightsrealworld,liu2025uraniadifferentiallyprivateinsights}. We therefore need a shared description of information needs that accommodates diverse analytical purposes and provides concrete objectives for DP insight design and evaluation.

% For example, Clio and Urania are designed for exploratory analysis without predefined schemas or query workloads, but they rely on manual facet engineering and unsupervised clustering in embedding space, which is difficult to tune under DP and entangles distinct semantic dimensions into coarse cluster summaries. Conversely, structured approaches map records directly to explicit categorical attributes, but existing methods rely on flat, single-level histograms that either miss fine-grained subcategories under DP noise or require static, manually specified taxonomies.

In this work, we focus on a setting in which analysts have a \emph{workload} of dataset-level analytical queries to guide insight extraction. The workload makes information needs explicit and provides a concrete criterion for utility. For example\footnote{Refer to Table 4 and Table 5 for more examples.}, access to a medical dataset could inspire the question ``What are the users' primary symptoms and chief complaints, and for those symptoms, what diagnostic tests do the doctors suggest?'' Answering such queries requires both \emph{insight depth} (resolving fine-grained categories within each attribute, such as specific symptoms and tests) and \emph{insight breadth} (preserving cross-attribute joint distributions, such as which tests are suggested for which symptoms). In this setting, cluster-level summaries and keyword statistics are often too coarse for detailed attribute statistics and do not directly expose cross-attribute joint distributions. Flat structured baselines map records directly to explicit categorical attributes, but rely on flat, single-level taxonomies that either miss fine-grained subcategories or require static schemas. These limitations motivate a workload-driven DP framework that simultaneously captures insight depth and insight breadth.

To address these challenges, we formalize the DP insight extraction, develop a workload-driven extraction method, and propose a shared evaluation protocol. Our main contributions are as follows:

\begin{enumerate}[leftmargin=*, label=\textbullet]
\item \textbf{Workload-based Formulation and Evaluation.} We formulate DP insight extraction from a query-answering perspective, viewing an insight as a reusable representation of a textual corpus that supports a workload of analytical queries. To evaluate analytical utility independently of the output format, we use a shared offline agent to answer the same queries solely from each method's released insights, and introduce a complementary metric to evaluate semantic and distributional fidelity for structured insights.

\item \textbf{The \system Framework.}
We introduce \system, an end-to-end framework for extracting structured insights under rigorous DP guarantees. It automatically maps the workload to attributes, builds a hierarchical and hybrid taxonomy that combines private open-set discovery with adaptive supplementation by a taxonomy model \citep{davidson2026reasoningdrivensyntheticdatageneration} to retain fine-grained details (depth), and applies DP tabular synthesis \citep{mckenna2024aimadaptiveiterativemechanism} to release a reusable private table that preserves cross-attribute correlations (breadth).

\item \textbf{Empirical Analysis of Utility and Limitations.}
With its workload-aware tabular structure and hierarchical taxonomy, \system achieves average gains of 12.7\% points in deterministic QA accuracy and 9.0 points in open-ended QA $F_1$ on WildChat, compared with the strongest baseline across privacy budgets. We also identify query structures, such as long topic-switching sequences, that remain challenging for \system's current representation.
\end{enumerate}

% \ryan{I think we should show a diagram of the high-level capability / vision of some kind here.  Can maybe shorten algorithms or move to the appendix to free up space.}
% \ryan{worth looking into what the types of work / contributions the venue values, and framing the intro around that to some extent. }

\section{Preliminaries}

\subsection{Differential Privacy}

To provide formal privacy guarantees over sensitive user interaction data, we ground our framework in Differential Privacy (DP). Two datasets $\mathcal{D}, \mathcal{D}'$ are defined as \textit{neighboring} (denoted $\mathcal{D} \sim \mathcal{D}'$) if they differ by at most one individual record.

\begin{definition}[Differential Privacy \citep{dwork2006calibrating}]
\label{def:dp}
A randomized mechanism $\mathcal{M}$ satisfies $(\varepsilon, \delta)$-differential privacy if, for all neighboring datasets $\mathcal{D} \sim \mathcal{D}'$ and every measurable subset of possible outputs $\mathcal{S} \subseteq \text{Range}(\mathcal{M})$:
\begin{equation}
    \mathbb{P}[\mathcal{M}(\mathcal{D}) \in \mathcal{S}] \le e^\varepsilon \, \mathbb{P}[\mathcal{M}(\mathcal{D}') \in \mathcal{S}] + \delta.
\end{equation}
%When $\delta = 0$, $\mathcal{M}$ satisfies pure $\varepsilon$-DP.
\end{definition}

To simplify privacy accounting across iterative operations, we track the privacy loss using zero-Concentrated Differential Privacy (zCDP).

\begin{definition}[$(\delta, \rho)$-zCDP \citep{bun2016concentrateddifferentialprivacysimplifications}]
\label{def:zcdp}
A randomized mechanism $\mathcal{M}$ satisfies $\delta$-approximate $\rho$-zCDP if, for every pair of neighboring datasets $\mathcal{D}\sim\mathcal{D}'$, there exist events $E$ and $E'$ with probabilities at least $1-\delta$ such that, letting $P$ and $Q$ denote the output distributions of $\mathcal{M}(\mathcal{D})$ and $\mathcal{M}(\mathcal{D}')$ conditioned on $E$ and $E'$, respectively, for every $\alpha>1$,
\begin{equation}
\max\{D_\alpha(P\Vert Q),D_\alpha(Q\Vert P)\}\le\rho\alpha,
\end{equation}
where $D_\alpha(P\Vert Q)$ denotes the R\'enyi divergence of order $\alpha$. When $\delta=0$, this reduces to $\rho$-zCDP.
\end{definition}

\subsection{Terminology}

In this section, we introduce the terminology used throughout the paper. Let $\mathcal{D} = \{x_1, \dots, x_N\}$ denote a corpus of $N$ textual records, such as user prompts or multi-turn dialogues. We begin by describing analysts' information needs through queries and workloads, and then define insights as reusable resources that support these needs.

\begin{definition}[Query and Workload]
\label{def:query_workload}
A \textbf{query} $q$ is a natural-language specification of a dataset-level analytical information need, asking about properties, patterns, or relationships in a dataset $\mathcal{D}$. A \textbf{workload} $\mathcal{W}=\{q_1,\ldots,q_k\}$ is a collection of queries representing the analytical needs.
\end{definition}

Throughout this work, we focus on dataset-level analytical queries rather than retrieving information from individual records, as the latter are inherently less compatible with privacy. 

\begin{definition}[Workload-Aware Insight]
\label{def:insight}
Given a dataset $\mathcal{D}$ and a workload $\mathcal{W}$, an \textbf{insight} $\mathcal{S}$ is a data structure derived from $\mathcal{D}$ that can provide answers to the queries in $\mathcal{W}$.
\end{definition}

In this work, we primarily study \textit{structured insights}, which organize information in a tabular form. We therefore introduce the basic components of this representation.

\begin{definition}[Attribute \& Taxonomy]
\label{def:taxonomy}
An \textbf{attribute} $A_i$ is an observable semantic dimension, and a \textbf{taxonomy} $\mathcal{T} = \{A_1:\Omega_1, \dots, A_m:\Omega_m\}$ associates each attribute $A_i$ with its value domain $\Omega_i$.
\end{definition}

\subsection{Problem Statement}

With the necessary definitions provided in the previous sections, we define the workload-driven DP insight extraction problem as follows. 
\begin{definition}[Workload-Driven DP Insight Extraction]
\label{def gen}
Let $\mathcal{D}$ be a private textual dataset and $\mathcal{W}$ a workload of analytical queries. The task of workload-driven DP insight extraction is to design a mechanism $\mathcal{F}$ that releases an insight representation $\mathcal{S}=\mathcal{F}(\mathcal{D},\mathcal{W})$ under $(\varepsilon,\delta)$-DP, such that $\mathcal{S}$ serves as an interpretable and reusable resource from which queries in $\mathcal{W}$ can be answered without further access to $\mathcal{D}$.
\end{definition}

% \begin{theorem}[zCDP to DP \citep{bun2016concentrateddifferentialprivacysimplifications}]
% \label{thm:zcdp_to_dp}
% If a randomized mechanism $\mathcal{M}$ satisfies $(\delta', \rho)$-zCDP, then for any target $\delta > \delta'$, $\mathcal{M}$ satisfies $(\varepsilon, \delta)$-differential privacy, where:
% \begin{equation}
%     \varepsilon = \rho + 2\sqrt{\rho \ln\left(\frac{1}{\delta - \delta'}\right)}.
% \end{equation}
% \end{theorem}

\section{Related Work}
\label{sec:existing_paradigms}

Existing approaches to private insight follow a common two-stage structure: a language model inference step that extracts annotations or summaries from each record, followed by an aggregation and privatization step (via clustering, aggregation, and/or differential privacy) that produces a dataset-level release. Within this shared framework, existing techniques can be organized along five capabilities (\Cref{tab:paradigm_taxonomy}): whether they satisfy formal differential privacy, capture \emph{insight depth} (multi-level hierarchical granularity within a semantic dimension), capture \emph{insight breadth} (joint distributions across multiple semantic dimensions), discover categories from data (open-set), and adapt to an explicit query workload.

\noindent \textbf{Clustering-based approaches} include Clio \citep{tamkin2024clioprivacypreservinginsightsrealworld} and its differentially private counterpart Urania \citep{liu2025uraniadifferentiallyprivateinsights}. These methods prompt a language model to extract a list of keywords from each record $x_i \in \mathcal{D}$, embed the keywords into a high-dimensional vector space, and partition the records into $K$ clusters. Clio relies on empirical privacy filters rather than formal DP guarantees, making it vulnerable to reconstruction attacks \citep{annamalai2026cliopatra}. Urania instead selects keywords through DP partition selection and clusters data via DP $k$-means. While effective for unconstrained exploratory analysis, assigning different records to a single cluster entangles multiple semantic dimensions and does not expose pairwise or higher-order distributions across attributes. 
% Moreover, combining high-dimensional private clustering with per-cluster keyword thresholding is challenging because both operations incur substantial noise when $\varepsilon$ is small. 
% \ryan{can we cite some paper that says DP clustering is a hard problem  with poor privacy / utility trade-offs?}

\begin{table}[t]
\centering
\caption{Taxonomy of insight extraction paradigms across five core capabilities: Clio \citep{tamkin2024clioprivacypreservinginsightsrealworld}, Urania \citep{liu2025uraniadifferentiallyprivateinsights}, \citet{chatterji2025chatgpt}, \texttt{Flat-Open} \citep{cheu2025provablyprivateanalyticsinsights}, \texttt{Flat-Close} \citep{hu2025actg}, and SHIP.}
\label{tab:paradigm_taxonomy}
\footnotesize
\setlength{\tabcolsep}{2.5pt}
\resizebox{\columnwidth}{!}{%
\begin{tabular}{@{}lccccc@{}}
\toprule
\textbf{Method} & \textbf{DP} & \textbf{Depth} & \textbf{Breadth} & \textbf{Open-Set} & \textbf{Workload} \\
\midrule
Clio & \xmark & \cmark & \xmark & \cmark & \xmark \\
Urania & \cmark & \cmark & \xmark & \cmark & \xmark \\
Chatterji et al. & \xmark & \xmark & \cmark & \xmark & \xmark \\
\texttt{Flat-Open} & \cmark & \xmark & \xmark & \cmark & \xmark \\
\texttt{Flat-Close} & \cmark & \xmark & \cmark & \xmark & \xmark \\
\midrule
\textbf{SHIP (Ours)} & \cmark & \cmark & \cmark & \cmark & \cmark \\
\bottomrule
\end{tabular}%
}
\end{table}

\noindent \textbf{Taxonomy-based approaches} map each record directly to discrete categories along explicit semantic attributes \citep{chatterji2025chatgpt, cheu2025provablyprivateanalyticsinsights, hu2025actg}. Given a collection of $m$ attributes, structured approaches use a language model to classify each record $x_i \in \mathcal{D}$ into a tuple of categories $(y_i^{(1)}, \dots, y_i^{(m)}) \in \Omega_1 \times \dots \times \Omega_m$, reducing the unstructured corpus to a simple table. However, existing mechanisms have limitations in insight depth and insight breadth. On the one hand, open-set structured summarization (\texttt{Flat-Open}; \citealp{cheu2025provablyprivateanalyticsinsights}) discovers categories through DP partition selection \citep{desfontaines2021differentiallyprivatepartitionselection}, which can miss low-frequency categories. The single-attribute formulation considered in their work does not explicitly address multi-attribute analysis. Extending it through independent $1$-way histograms would divide the privacy budget across attributes, raising selection thresholds and further reducing category coverage. These independent releases would still not preserve cross-attribute joint distributions.

On the other hand, closed-set structured synthesis (\texttt{Flat-Close}; \citealp{chatterji2025chatgpt, hu2025actg}) specifies each candidate domain $\Omega_j$ a priori using domain experts or a zero-shot language model without accessing the private dataset $\mathcal{D}$. After classifying each record into the pre-specified taxonomy, the differentially private variant \citep{hu2025actg} applies a marginal-based synthetic data mechanism such as AIM \citep{mckenna2024aimadaptiveiterativemechanism}, which privately measures low-order marginals and synthesizes a tabular dataset $\tilde{\mathcal{D}}$. While this approach captures insight breadth, its utility depends on the quality of the taxonomy. Finally, none of these existing paradigms formalize the downstream analytical queries that the insight is intended to support, leaving attribute selection manual and cross-format evaluation ad hoc.

\section{Method}
\label{sec:method}

In this section, we introduce our method, Structured Hierarchical Insights with Privacy (SHIP), which works in three steps: it identifies the information dimensions required by the workload, constructs a hierarchical structured representation over these dimensions, and privately synthesizes their joint distribution.

\begin{algorithm}[!t]
\caption{Hierarchical Hybrid DP Taxonomy Building}
\label{alg:hybrid discovery}
\KwIn{Records $\mathcal{D}=\{x_1,\dots,x_N\}$, Dataset description $S_D$, Max depth $d$, Attribute $A$, Privacy parameters $(\rho_{\text{step}},\delta_{\text{step}})$, Taxonomy model $\mathcal{M}_{\text{tax}}$, Classification model $\mathcal{M}_{\text{cls}}$.}
\KwOut{Taxonomy $\mathcal{T}_A$, Internal record paths $\{y_i\}_{i=1}^N$.}

$\rho_{\text{open}}=0.9\rho_{\text{step}},\quad \rho_{\text{close}}=0.1\rho_{\text{step}}$\;
$\sigma_{\text{open}}=1/\sqrt{2\rho_{\text{open}}},\quad \sigma_{\text{close}}=1/\sqrt{2\rho_{\text{close}}}$\;
Compute $\tau_{\text{open}}$ and $\tau_{\text{close}}$\;
Initialize tree-like domain $\Omega_A$ with root $\bot$\;

\For{$l \leftarrow 1$ \KwTo $d$, each level $l-1$ category $c_p$ in $\Omega_A$}{
    % Skip if $c_p$ is \texttt{other} or \texttt{none}\;
    $\mathcal{D}_p \leftarrow$ records assigned to $c_p$\;

    Classify $\mathcal{D}$ via $\mathcal{M}_\text{cls}$ to obtain domain $\mathcal{C}_{\text{open}}$ and allocation results $\{y_i\}$ and conduct partition selection\;

    Obtain supplementary domain $\mathcal{C}_{\text{comp}}$ through $\mathcal{M}_{\text{tax}}$ and re-allocation results $\{y_i\}$ through $\mathcal{M}_{\text{cls}}$\;

    $\mathcal{C}_{\text{surv}} = \mathcal{C}_{\text{open}} \cup \mathcal{C}_{\text{comp}}$
    
    Add $\mathcal{C}_{\text{surv}}$ as children of $c_p$ in $\Omega_A$ and allocate $\mathcal{D}_p$ using $\{y_i\}$\;
}
\Return{$\{A:\Omega_A\},\{y_i\}_{i=1}^N$}\;
\end{algorithm}

\subsection{DP Taxonomy Building}
\label{subsec:taxonomy}

As discussed in \Cref{sec:existing_paradigms}, open-set discovery may miss low-frequency categories that fall below the DP threshold, whereas closed-set taxonomies rely on static priors that may be inaccurate (see \Cref{{sec:main exp}} and \Cref{app:cat proportion}). To capture fine-grained \emph{insight depth} while maintaining coverage, we introduce a hierarchical hybrid category proposal mechanism (\Cref{alg:hybrid discovery}, \Cref{fig:ship}; full pseudocode in \Cref{alg:hybrid discovery full}) that combines open-set discovery with closed-set supplementation across multiple levels of granularity \citep{bagdasaryan2022sparsefederatedanalyticslocation, zhu2020federatedheavyhittersdiscovery, tamkin2024clioprivacypreservinginsightsrealworld}.

\begin{figure}[!t]
\centering
\includegraphics[width=\columnwidth]{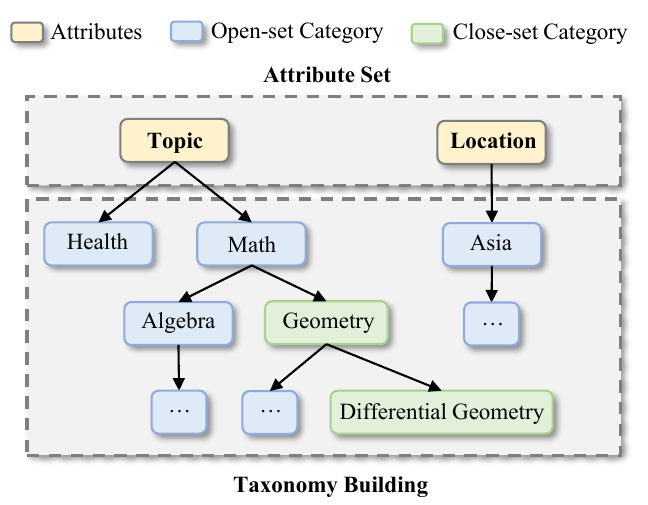}
\caption{An illustration of \system's taxonomy building step. The hierarchical taxonomy originates from the top-level attribute set, integrating with open-set categories (blue) with complementary close-set categories (green) for multi-level categorization.}
\label{fig:ship}
\end{figure}

Taxonomy construction proceeds top-down for up to $d$ levels. At each parent category $c_p$, we first use the classification model $\mathcal{M}_{\text{cls}}$ to freely propose subcategories from its assigned records, which we call ``\textit{Discovery}''. To ensure differential privacy, these proposals are filtered using the Gaussian Thresholding Mechanism \citep{desfontaines2021differentiallyprivatepartitionselection} with threshold $\tau_{\text{open}}(\sigma,\delta)=1+\sigma\Phi^{-1}(1-\delta)$, where $\sigma$ is the noise level and $\Phi$ is the standard Gaussian cumulative distribution function. Data-driven discovery can nonetheless miss concepts whose counts fall below the privacy threshold, particularly when the budget is divided across multiple attributes and hierarchical levels. Conditioned on the surviving categories $\mathcal{C}_{\text{open}}$, we therefore prompt the taxonomy model $\mathcal{M}_{\text{tax}}$ to propose mutually exclusive complementary categories $\mathcal{C}_{\text{comp}}$ using its world knowledge about the attribute, dataset description, and parent category $c_p$, which we call ``\textit{Complementary Proposal}''. Records whose initial categories did not survive are reclassified over the combined candidates. The surviving categories $\mathcal{C}_{\text{surv}}$ are attached to $c_p$, and records are allocated for refinement at the next level, while \texttt{other} and \texttt{none} remain terminal. The resulting root-to-leaf paths define the hierarchical domain $\Omega_A$ and the corresponding record path assignments.

\subsection{SHIP} %Structured Hierarchical Insights with Privacy}
\label{subsec:ship}

Based on the taxonomy discovery mechanism introduced above, we introduce \system in \Cref{alg:ship}, consisting of three main components:

\noindent \textbf{Workload Analysis.} For a workload $\mathcal{W}$, \system first analyzes it through a highly capable model $\mathcal{M}_{\text{attr}}$ and finds atomic features needed to answer queries in it. This step returns an attribute set $\mathcal{A}$ for further structured insight extraction.

\begin{algorithm}[!t]
\caption{\system}
\label{alg:ship}
\KwIn{Records $\mathcal{D}=\{x_1,\dots,x_N\}$, Depth $d$, Workload $\mathcal{W}$, Dataset description $S_D$, Privacy budget $(\rho_{\text{aim}},\rho_{\text{tax}},\delta_{\text{tax}})$, Attribute model $\mathcal{M}_{\text{attr}}$, Taxonomy model $\mathcal{M}_{\text{tax}}$, Classification model $\mathcal{M}_{\text{cls}}$.}
\KwOut{Taxonomy $\mathcal{T}$, Insight $\widetilde{\mathcal{D}}$.}

Obtain attribute set $\mathcal{A}\leftarrow\mathcal{M}_{\text{attr}}(\mathcal{W})$\;
Obtain $\rho_{\text{step}}$ and $\delta_{\text{step}}$ via \Cref{eq:step budget}\;

\For{$A\in\mathcal{A}$}{
    $\mathcal{T}_A,\{y_i^{(A)}\}_{i=1}^N \leftarrow$ \Cref{alg:hybrid discovery}\;
}

$\mathcal{T}\leftarrow\bigcup_{A\in\mathcal{A}}\mathcal{T}_A$\;
$\widetilde{\mathcal{D}}\leftarrow\textsc{AIM}(\{(y_i^{(A)})_{A\in\mathcal{A}}\}_{i=1}^N,\mathcal{T},\rho_{\text{aim}})$\;

\Return{$\mathcal{T},\widetilde{\mathcal{D}}$}\;
\end{algorithm}

\noindent \textbf{Taxonomy Building and Category Allocation.} 
Given a set of target attributes $\mathcal{A}$ and a maximum depth $d$, \system builds the hierarchical taxonomy $\mathcal{T}$ by repeatedly applying \Cref{alg:hybrid discovery} for all attributes $A \in \mathcal{A}$. The privacy budget is uniformly allocated across all attributes and hierarchy levels:
\begin{equation} \label{eq:step budget}
\rho_{\text{step}} = \frac{\rho_{\text{tax}}}{|\mathcal{A}|\cdot d}, \qquad \delta_{\text{step}} = \frac{\delta_{\text{tax}}}{|\mathcal{A}|\cdot d}.
\end{equation}
For each attribute $A\in\mathcal{A}$, \Cref{alg:hybrid discovery} returns the attribute taxonomy $\mathcal{T}_A=\{A:\Omega_A\}$ and root-to-leaf record assignments $\{y_i^{(A)}\}_{i=1}^N$ for each attribute $A$. We merge them and obtain a multi-attribute taxonomy $\mathcal{T}$ and a multi-dimensional allocation table $\tilde{D}$.
% \ryan{this would benefit from a figure or example} \kai{todo: add a figure}

\noindent \textbf{Private Tabular Synthesis and Release.}
After taxonomy construction, we can release the DP taxonomy and separately privatized per-attribute histograms. However, the record-level category assignments $\{(y_i^{(A)})_{A \in \mathcal{A}}\}_{i=1}^N$ that capture cross-attribute correlations remain private. We then use the taxonomy $\mathcal{T}$ to define discrete attribute domains and apply a DP synthetic data mechanism, AIM \citep{mckenna2024aimadaptiveiterativemechanism}, with the privacy budget $\rho_{\text{aim}}$. AIM privately measures low-order marginals over the category assignments and estimates a dataset $\widetilde{\mathcal{D}}$ that preserves those marginals well.
The resulting structured insights, consisting of the taxonomy $\mathcal{T}$ and the synthetic dataset $\widetilde{\mathcal{D}}$, can be released to answer queries and conduct downstream analyses. Since these operations rely solely on the released DP outputs, the post-processing property of DP ensures that they do not incur any additional privacy cost.

\subsection{Privacy Guarantee}
\label{subsec:privacy}
In this section, we formally evaluate the privacy guarantee of \system through \Cref{thm:ship_privacy}. The detailed proof is provided in \Cref{sec:appendix proof}.
\begin{theorem}[Privacy Guarantee of \system]
\label{thm:ship_privacy}
Let $\mathcal{D}$ be a private dataset under unbounded DP. We define $\rho = \rho_{\text{tax}} + \rho_{\text{aim}}$ and $\bar{\delta} = \frac{\delta-\delta_{\text{tax}}}{1-\delta_{\text{tax}}}$. Then, for any $\delta_{\text{tax}} < \delta < 1$, \system (\Cref{alg:ship}) satisfies $(\varepsilon,\delta)$-DP, where
\begin{equation*}
\varepsilon = \inf_{\alpha>1}\left\{\alpha\rho+\log\left(1-\frac{1}{\alpha}\right)-\frac{\log\bar{\delta}+\log\alpha}{\alpha-1}\right\}.
\end{equation*}
\end{theorem}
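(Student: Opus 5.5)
We would prove \Cref{thm:ship_privacy} by showing that \system satisfies $\delta_{\text{tax}}$-approximate $\rho$-zCDP in the sense of \Cref{def:zcdp}, and then converting this to $(\varepsilon,\delta)$-DP with the tight Rényi-to-DP conversion. The argument decomposes \Cref{alg:ship} into the mechanisms that touch $\mathcal{D}$ and treats everything else as post-processing. \textbf{Workload analysis} ($\mathcal{M}_{\text{attr}}(\mathcal{W})$) uses only the public workload, so it is data-independent. For each attribute $A$ and level $l$, the data-dependent releases are the Gaussian-thresholded open-set partition selection and the release of surviving categories. Complementary proposals by $\mathcal{M}_{\text{tax}}$ are post-processing of the released $\mathcal{C}_{\text{open}}$, the public description $S_D$ and the public parent $c_p$. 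The final AIM step releases $\widetilde{\mathcal{D}}$ from the record-level assignment table. Per-record classification by $\mathcal{M}_{\text{cls}}$ is a record-wise map, so under unbounded neighbors (add/remove one record) the induced tables are again neighbors, and the sensitivity analysis can be carried out on the classified records.

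The per-step claim is that each hierarchical level for each attribute is $\delta_{\text{step}}$-approximate $\rho_{\text{step}}$-zCDP. We would cite the Gaussian thresholding analysis of \citet{desfontaines2021differentiallyprivatepartitionselection}. Each record contributes to exactly one category at that level, because records are allocated to a single parent and assigned a single child, so the $\ell_2$-sensitivity is $1$. Gaussian noise with $\sigma=1/\sqrt{2\rho}$ is then $\rho$-zCDP on categories present in both datasets. The threshold $\tau=1+\sigma\Phi^{-1}(1-\delta)$ ensures that a category present only because of the differing record survives with probability at most $\delta$. Conditioning on the complement of that event gives the events $E,E'$ required by \Cref{def:zcdp}.

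The split $\rho_{\text{open}}=0.9\rho_{\text{step}}$, $\rho_{\text{close}}=0.1\rho_{\text{step}}$ covers the second thresholding applied to reclassified counts. The two zCDP costs add to $\rho_{\text{step}}$. We must check that the $\delta$ parts are allocated so that the step total is $\delta_{\text{step}}$; if both thresholds consume $\delta$, we would split $\delta_{\text{step}}$ between them correspondingly. Adaptive composition of approximate zCDP \citep{bun2016concentrateddifferentialprivacysimplifications} over the $|\mathcal{A}|\cdot d$ steps, with budgets from \Cref{eq:step budget}, gives $\delta_{\text{tax}}$-approximate $\rho_{\text{tax}}$-zCDP for taxonomy construction. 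AIM is $\rho_{\text{aim}}$-zCDP \citep{mckenna2024aimadaptiveiterativemechanism}, run adaptively on the released taxonomy. Composing the two stages yields $\delta_{\text{tax}}$-approximate $(\rho_{\text{tax}}+\rho_{\text{aim}})$-zCDP.

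For the conversion, we condition on the good events, of probability at least $1-\delta_{\text{tax}}$. On those events the conditioned distributions $P,Q$ satisfy $D_\alpha\le\alpha\rho$ for all $\alpha>1$. We would then apply the improved conversion (Canonne--Kamath--Steinke): Rényi-DP of order $\alpha$ with value $\alpha\rho$ implies $(\varepsilon',\bar\delta)$-DP with
\[
\varepsilon'=\alpha\rho+\log(1-1/\alpha)-\frac{\log\bar\delta+\log\alpha}{\alpha-1}.
\]
Undoing the conditioning gives
\[
\Pr[\mathcal{M}(\mathcal{D})\in S]\le(1-\delta_{\text{tax}})\bigl(e^{\varepsilon'}\Pr[\cdot\mid E']+\bar\delta\bigr)+\delta_{\text{tax}}.
\]
Bounding $(1-\delta_{\text{tax}})\Pr[\cdot\mid E']\le\Pr[\mathcal{M}(\mathcal{D}')\in S]/(1-\delta_{\text{tax}})\cdot(1-\delta_{\text{tax}})$ yields the additive term $(1-\delta_{\text{tax}})\bar\delta+\delta_{\text{tax}}=\delta$, which explains the definition of $\bar\delta$. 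Taking the infimum over $\alpha$ gives the stated $\varepsilon$.

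The main obstacle is the unconditioning step. The conditioned output distributions appear on both sides, and the event $E'$ depends on $\mathcal{D}'$, so it is delicate to ensure that the multiplicative factor does not pick up an extra $1/(1-\delta_{\text{tax}})$. We would handle this with the standard coupling argument: take $E$ and $E'$ to be the events that no neighbor-only category passes any threshold, and bound $P(S)\le \Pr[\mathcal{M}(\mathcal{D}')\in S]/\Pr[E']$ carefully. A secondary concern is verifying that hierarchical reallocation keeps per-level sensitivity at $1$ despite the reclassification step.
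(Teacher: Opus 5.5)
Your decomposition matches the paper's: per-level Gaussian partition selection with $\ell_2$-sensitivity one, complementary proposals and deduplication as post-processing, additive composition over the $|\mathcal{A}|\cdot d$ steps, adaptive composition with AIM, then conversion. Your "secondary concern" is already settled by your own observation: given the earlier releases, each record's (re)classified label depends only on that record.

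\textbf{The gap: the $\delta$-accounting of the closed stage.} You leave this unresolved, and your fallback would not prove the theorem for the algorithm as written.
\begin{itemize}
\item Splitting $\delta_{\text{step}}$ between the two thresholds changes the mechanism. $\tau_{\text{open}}$ is computed with the full $\delta_{\text{step}}$.
\item $\tau_{\text{close}}=10$ is a fixed constant, not calibrated to any $\delta$. So no partition-selection bound is available for it, and none is needed.
\end{itemize}

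The missing observation is that the closed stage is not partition selection at all.
\begin{itemize}
\item Its candidate domain $\mathcal{C}_{\text{open}}\cup\mathcal{C}_{\text{comp}}$ is a function of already-released DP outputs and public inputs. It is therefore fixed once you condition on them.
\item Every candidate in it, including those with zero count, receives a noisy count.
\item That is the ordinary Gaussian mechanism on a fixed histogram of $\ell_2$-sensitivity one. It is pure $\rho_{\text{close}}$-zCDP with no $\delta$ cost.
\item Thresholding at $\tau_{\text{close}}$, mapping excluded labels to \texttt{other}, and growing $\Omega_A$ are all post-processing.
\end{itemize}
Only the open stage consumes $\delta_{\text{step}}$. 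Each step is then exactly $\delta_{\text{step}}$-approximate $(\rho_{\text{open}}+\rho_{\text{close}})$-zCDP, which is what the paper proves.

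\textbf{The "main obstacle" you name is not one.} Your displayed chain already works:
\begin{itemize}
\item Since $t\mapsto tP(S)+(1-t)$ is nonincreasing, $\Pr[\mathcal{M}(\mathcal{D})\in S]\le(1-\delta_{\text{tax}})P(S)+\delta_{\text{tax}}$.
\item Also $(1-\delta_{\text{tax}})Q(S)\le\Pr[E']\,Q(S)=\Pr[\{\mathcal{M}(\mathcal{D}')\in S\}\cap E']\le\Pr[\mathcal{M}(\mathcal{D}')\in S]$.
\end{itemize}
So no factor $1/(1-\delta_{\text{tax}})$ appears and no coupling argument is needed. The paper simply cites this approximate-zCDP-to-DP conversion as a theorem. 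Your derivation via the Canonne--Kamath--Steinke bound on the conditioned pair is a correct, self-contained version of it.

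One minor point: if $\mathcal{M}_{\text{cls}}$ is randomized, say explicitly (as the paper does) that you fix its data-independent randomness, coupled across records shared by neighbors. Only then is "record-wise map" literally true.
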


% The privacy guarantee holds by combining the sequential composition of zCDP across the $|\mathcal{A}| \cdot d$ discovery steps and the AIM synthesis phase with Gaussian DP partition selection, followed by conversion to standard $(\varepsilon, \delta)$-DP. The detailed proof and sensitivity analysis are provided in \Cref{sec:appendix proof}.

\begin{table*}[!t]
\centering
\caption{Deterministic query-answering evaluation across single- and multi-attribute questions on {WildChat} and {SynChat} under varying privacy budgets $\varepsilon$. We report average score (1-5 scale $\uparrow$) and accuracy ($\text{Score} \ge 4$, $\% \uparrow$).}
\label{tab:qa_eval}
\small
\setlength{\tabcolsep}{2.2pt} % Scaled to comfortably fit all 14 columns across \textwidth
\begin{tabular}{ll cccccc cccccc}
\toprule
\multirow{3}{*}[-0.9ex]{$\varepsilon\quad\;\;$} & \multirow{3}{*}[-0.9ex]{\textbf{Algorithm}} & \multicolumn{6}{c}{\textbf{WildChat}} & \multicolumn{6}{c}{\textbf{SynChat}} \\
\cmidrule(lr){3-8} \cmidrule(lr){9-14}
 & & \multicolumn{2}{c}{Single-Attr} & \multicolumn{2}{c}{Multi-Attr} & \multicolumn{2}{c}{{Overall}} 
   & \multicolumn{2}{c}{Single-Attr} & \multicolumn{2}{c}{Multi-Attr} & \multicolumn{2}{c}{{Overall}} \\
\cmidrule(lr){3-4} \cmidrule(lr){5-6} \cmidrule(lr){7-8} \cmidrule(lr){9-10} \cmidrule(lr){11-12} \cmidrule(lr){13-14}
 & & Avg Score & Acc & Avg Score & Acc & Avg Score & Acc & Avg Score & Acc & Avg Score & Acc & Avg Score & Acc \\
\midrule
\multirow{4}{*}[-0.3ex]{1.0}
 & \texttt{Urania}                 & 1.27 & 6.0 & 1.24 & 6.0 & 1.26 & 6.0 & 1.27 & 6.7 & 1.10 & 0.0 & 1.18 & 3.3 \\
 & \texttt{Flat-Close}                            & 2.43 & 32.7 & 2.59 & 37.3 & 2.51 & 35.0 & 1.81 & 20.0 & 1.70 & 17.3 & 1.76 & 18.7 \\
 & \texttt{Flat-Open}                               & 3.01 & 46.0 & 3.13 & 44.7 & 3.07 & 45.3 & 2.27 & 30.7 & 1.88 & 21.3 & 2.07 & 26.0 \\
 \cmidrule{2-14}
 & \cellcolor{gray!20}\texttt{\system} (Ours) 
   & \cellcolor{gray!20}\textbf{4.05} & \cellcolor{gray!20}\textbf{72.7} & \cellcolor{gray!20}\textbf{3.56} & \cellcolor{gray!20}\textbf{53.3} & \cellcolor{gray!20}\textbf{3.80} & \cellcolor{gray!20}\textbf{63.0} 
   & \cellcolor{gray!20}\textbf{3.54} & \cellcolor{gray!20}\textbf{59.3} & \cellcolor{gray!20}\textbf{4.41} & \cellcolor{gray!20}\textbf{83.3} & \cellcolor{gray!20}\textbf{3.97} & \cellcolor{gray!20}\textbf{71.3} \\
\midrule
\multirow{4}{*}[-0.3ex]{5.0}
 & \texttt{Urania}                 & 1.43 & 10.7 & 1.16 & 4.0 & 1.30 & 7.3 & 2.29 & 28.0 & 1.53 & 12.7 & 1.91 & 20.3 \\
 & \texttt{Flat-Close}                            & 2.26 & 29.3 & 3.03 & 48.0 & 2.65 & 38.7 & 1.60 & 14.7 & 1.57 & 13.3 & 1.59 & 14.0 \\
 & \texttt{Flat-Open}                               & 2.92 & 43.3 & \textbf{3.33} & \textbf{49.3} & 3.13 & 46.3 & 2.13 & 28.0 & 2.53 & 36.0 & 2.33 & 32.0 \\
 \cmidrule{2-14}
 & \cellcolor{gray!20}\texttt{\system} (Ours) 
   & \cellcolor{gray!20}\textbf{3.60} & \cellcolor{gray!20}\textbf{60.0} & \cellcolor{gray!20}3.31 & \cellcolor{gray!20}46.7 & \cellcolor{gray!20}\textbf{3.46} & \cellcolor{gray!20}\textbf{53.3} 
   & \cellcolor{gray!20}\textbf{3.80} & \cellcolor{gray!20}\textbf{64.7} & \cellcolor{gray!20}\textbf{4.27} & \cellcolor{gray!20}\textbf{79.3} & \cellcolor{gray!20}\textbf{4.04} & \cellcolor{gray!20}\textbf{72.0} \\
\midrule
\multirow{4}{*}[-0.3ex]{10.0}
 & \texttt{Urania}                 & 1.59 & 14.0 & 1.03 & 0.0 & 1.31 & 7.0 & 2.47 & 32.7 & 1.70 & 14.0 & 2.08 & 23.3 \\
 & \texttt{Flat-Close}                            & 2.33 & 30.0 & 3.01 & 47.3 & 2.67 & 38.7 & 1.72 & 17.3 & 1.63 & 15.3 & 1.68 & 16.3 \\
 & \texttt{Flat-Open}                               & 2.89 & 42.7 & 3.18 & 48.0 & 3.03 & 45.3 & 2.17 & 29.3 & 2.52 & 35.3 & 2.34 & 32.3 \\
 \cmidrule{2-14}
 & \cellcolor{gray!20}\texttt{\system} (Ours) 
   & \cellcolor{gray!20}\textbf{3.85} & \cellcolor{gray!20}\textbf{66.0} & \cellcolor{gray!20}\textbf{3.46} & \cellcolor{gray!20}\textbf{51.3} & \cellcolor{gray!20}\textbf{3.66} & \cellcolor{gray!20}\textbf{58.7} 
   & \cellcolor{gray!20}\textbf{3.65} & \cellcolor{gray!20}\textbf{62.0} & \cellcolor{gray!20}\textbf{3.98} & \cellcolor{gray!20}\textbf{72.0} & \cellcolor{gray!20}\textbf{3.82} & \cellcolor{gray!20}\textbf{67.0} \\
\bottomrule
\end{tabular}
\end{table*}

\section{Experiments}

In this section, we briefly introduce our experimental setup, including evaluation metrics, baselines, and datasets, following by our experimental results and findings.  Additional implementation and experimental details are provided in \Cref{app:detail exp set}.

\noindent \textbf{Evaluation Metrics.}
We evaluate the dataset-level statistical and analytical utility of insights through query answering, using two complementary protocols.
\textit{Deterministic QA Accuracy} \citep{zhang2025chatlogscollectiveinsights} evaluates targeted questions with well-defined ground-truth answers, including numerical, comparison, and yes/no questions. We further distinguish between single-feature and multi-feature questions based on the information required to answer them (see \Cref{sec:appendix result} for examples.) An offline agent built on Gemini-3.5-Flash-Lite answers each question using only the generated insight, and the response is evaluated against the ground truth on a discrete 1-5 scale using LLM-as-a-judge.
\textit{Open-Ended QA Quality} evaluates broader analytical questions that require a comprehensive answer (see \Cref{tab:open_report_ship_single}). Following factuality-based evaluation \citep{min2023factscore}, we decompose the offline agent's reports into atomic statements and compute precision (the fraction of insight-derived statements supported by the ground truth), recall (the fraction of ground-truth statements supported by the insight), and $F_1$.

These two protocols provide a common basis for evaluating insights across different output formats, measuring both their ability to recover specific facts and to support higher-level analysis. For structured insights, we additionally introduce \textit{Semantic Sinkhorn Divergence} to assess the fidelity of the extracted tabular representation (details provided in \Cref{sec:app sink div}). We report 1-way divergence over individual attributes and 2-way divergence over attribute pairs, capturing both marginal information and cross-attribute relationships.

\noindent \textbf{Baselines.}
We compare \system with representative well-established differentially private insight generation approaches described in \Cref{sec:existing_paradigms} and enumerated in \Cref{tab:paradigm_taxonomy} (see also \Cref{app:detail exp set}.) The main QA evaluation compares the workload-driven utility of representative private insight pipelines across output formats. The structured fidelity evaluation complements this comparison by examining how taxonomy construction affects information preservation within tabular representations. We therefore include the hierarchical closed- and open-set variants of \system, denoted Hie-Close and Hie-Open, in the structured fidelity evaluation, where they serve as controlled ablations of taxonomy construction. For all methods, we use Gemini-3.5-Flash as the taxonomy and attribute models and Gemini-3.5-Flash-Lite as the classification model.

\noindent \textbf{Datasets.}
For query-answering evaluation, we use GenAI conversation datasets, including WildChat \citep{wildchat} and SynChat, which is a synthetic human-AI conversation dataset constructed on predefined metadata and thus has deterministic ground truth. Structured insight quality is evaluated on WildChat, CFPB \citep{cfpb}, and ChatDoctor \citep{chatdoctor}, covering conversational, financial, and medical domains. We additionally construct SynChat\_topic to study challenging queries involving variable-length structures such as within-session topic transitions.

\begin{figure*}[!t]
\centering
\includegraphics[width=\textwidth]{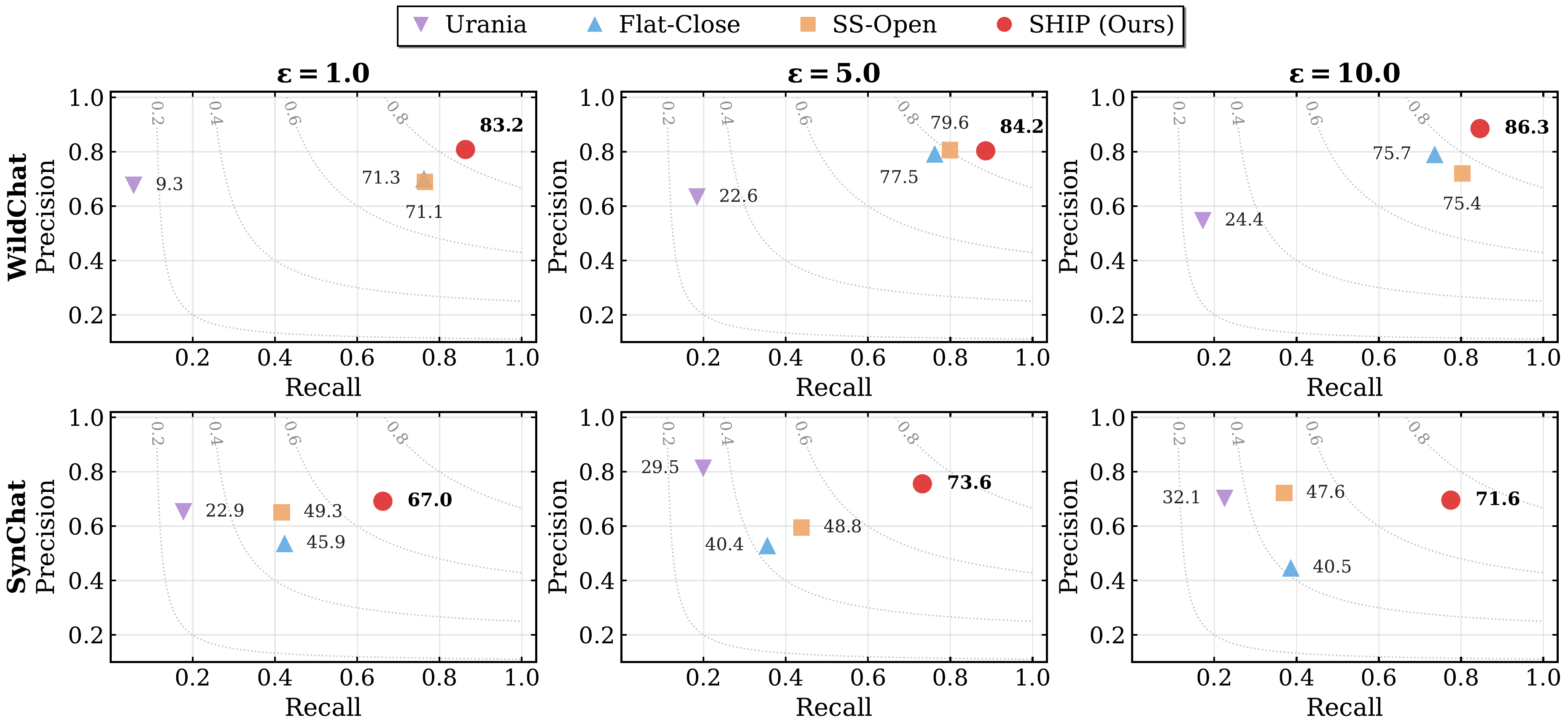}
\caption{Precision, Recall, and $F_1$ in open-ended query-answering evaluation across WildChat (top row) and SynChat (bottom row) under privacy budgets $\varepsilon \in \{1, 5, 10\}$. Dotted background curves denote iso-$F_1$ contours.}
\label{fig:open_qa_eval}
\end{figure*}

\subsection{Main Experiments}
\label{sec:main exp}

% \ryan{the results make it seem like prior work is "bad", but it just wasn't designed to solve the problem we consider in this work. Framing should reflect that}

\noindent \textbf{Deterministic Query-Answering Evaluation.}
\Cref{tab:qa_eval} reports the aggregate results of deterministic query answering averaged across the workload for WildChat and SynChat. To supplement these aggregate results, we provide examples illustrating different queries and the answers obtained by different insights mechanisms in \Cref{sec:appendix result}. \system achieves the strongest overall performance across most tested settings, which is expected because it is specifically designed for this query answering task.  Because \system delivers fine-grained semantic information and cross-attribute relationships, it's able to effectively answer a wider variety of queries than alternative approaches.

The baseline results illustrate how different representations support the evaluated workload. Urania’s cluster-level summaries and keyword statistics emphasize thematic patterns for exploratory analysis, offering less direct support for our analytical queries about fine-grained attributes and cross-attribute joint distributions. This helps explain its lower accuracy, particularly on multi-attribute queries. Flat-Close and Flat-Open expose explicit attributes for statistical analysis, but their flat taxonomies provide less coverage of the semantic distinctions required by detailed queries in this workload.

% \ryan{If we have space, I think it would be worthwhile bringing this table in from the appendix to the main body.}
% \ryan{I would move the pointer to this table to the top of this paragraph, before talking about why Urania and other baselines might have scored poorly on this eval.}

\begin{table*}[!t]
\centering
\caption{Distribution fidelity (Sinkhorn divergence $\downarrow$) across datasets under varying privacy budgets $\varepsilon$.}
\small
\begin{tabular}{ll cc cc cc}
\toprule
\multirow{2}{*}[-0.4ex]{$\varepsilon$} & \multirow{2}{*}[-0.4ex]{\textbf{Algorithm}} & \multicolumn{2}{c}{\textbf{CFPB}} & \multicolumn{2}{c}{\textbf{ChatDoctor}} & \multicolumn{2}{c}{\textbf{WildChat}} \\
\cmidrule(lr){3-4} \cmidrule(lr){5-6} \cmidrule(lr){7-8}
 & & 1-way Div $\downarrow$ & 2-way Div $\downarrow$ & 1-way Div $\downarrow$ & 2-way Div $\downarrow$ & 1-way Div $\downarrow$ & 2-way Div $\downarrow$ \\
\midrule
\multirow{5}{*}{1.0} 
 & \texttt{Flat-Close}        & 0.2215          & 0.3469          & 0.1142          & 0.1917          & 0.1696          & 0.2966          \\
 & \texttt{Flat-Open}           & 0.1581          & 0.2529          & 0.1178          & 0.1831          & 0.1581          & 0.2585          \\
 & \texttt{Hie-Close}         & 0.1465          & 0.2182          & \textbf{0.0745} & 0.1139          & 0.1564          & 0.2353          \\
 & \texttt{Hie-Open}          & 0.1314          & 0.1980          & 0.0900          & 0.1396          & 0.1324          & 0.2155          \\
 \cmidrule{2-8}
 & \cellcolor{gray!20}\texttt{\system} (Ours) & \cellcolor{gray!20}\textbf{0.1313} & \cellcolor{gray!20}\textbf{0.1896} & \cellcolor{gray!20}0.0762 & \cellcolor{gray!20}\textbf{0.1129} & \cellcolor{gray!20}\textbf{0.1201} & \cellcolor{gray!20}\textbf{0.1942} \\
\midrule
\multirow{5}{*}{5.0} 
 & \texttt{Flat-Close}        & 0.2219          & 0.3473          & 0.1142          & 0.1915          & 0.1697          & 0.2966          \\
 & \texttt{Flat-Open}           & 0.1563          & 0.2483          & 0.1167          & 0.1815          & 0.1570          & 0.2564          \\
 & \texttt{Hie-Close}         & 0.1472          & 0.2198          & 0.0746          & 0.1141          & 0.1565          & 0.2355          \\
 & \texttt{Hie-Open}          & \textbf{0.1132} & 0.1675          & 0.0833          & 0.1286          & 0.1246          & 0.2023          \\
 \cmidrule{2-8}
 & \cellcolor{gray!20}\texttt{\system} (Ours) & \cellcolor{gray!20}0.1143 & \cellcolor{gray!20}\textbf{0.1652} & \cellcolor{gray!20}\textbf{0.0744} & \cellcolor{gray!20}\textbf{0.1104} & \cellcolor{gray!20}\textbf{0.1192} & \cellcolor{gray!20}\textbf{0.1915} \\
\midrule
\multirow{5}{*}{10.0} 
 & \texttt{Flat-Close}        & 0.2219          & 0.3472          & 0.1143          & 0.1916          & 0.1697          & 0.2967          \\
 & \texttt{Flat-Open}           & 0.1553          & 0.2460          & 0.1165          & 0.1809          & 0.1574          & 0.2569          \\
 & \texttt{Hie-Close}         & 0.1474          & 0.2200          & \textbf{0.0746} & 0.1141          & 0.1564          & 0.2354          \\
 & \texttt{Hie-Open}          & \textbf{0.1037} & \textbf{0.1534} & 0.0810          & 0.1247          & 0.1217          & 0.1973          \\
 \cmidrule{2-8}
 & \cellcolor{gray!20}\texttt{\system} (Ours) & \cellcolor{gray!20}0.1114 & \cellcolor{gray!20}0.1630 & \cellcolor{gray!20}0.0755 & \cellcolor{gray!20}\textbf{0.1133} & \cellcolor{gray!20}\textbf{0.1175} & \cellcolor{gray!20}\textbf{0.1876} \\
\bottomrule
\end{tabular}%
\label{tab:div}
\end{table*}

\noindent \textbf{Open-ended Query-Answering Evaluation.}
\Cref{fig:open_qa_eval} presents the results of open-ended analytical query answering across WildChat and SynChat. The reports generated from several baseline resources achieve relatively high precision but substantially lower recall, indicating that they capture only a subset of the reference findings. For example, on SynChat at $\varepsilon = 1.0$, Urania achieves 65.3\% precision and 17.7\% recall; similarly, at $\varepsilon = 10.0$, Flat-Open achieves 72.2\% precision and 37.0\% recall. The low recalls indicates that some important conclusions for queries that we can obtain from ground truth are not captured by these insights.

\system achieves broader coverage: on WildChat, its recall consistently ranges from 84.0\% to 89.0\%, while on SynChat it also achieves substantially higher recall than the competing methods. This pattern is consistent with the complementary information made available by its integrated design, which combines hierarchical hybrid taxonomy construction with private tabular synthesis to expose fine-grained semantic categories and cross-attribute relationships. Together with its precision scores, this broader coverage yields the highest $F_1$ scores across all evaluation settings, indicating the strongest combined performance on factual correctness and finding coverage.

\noindent \textbf{Structured Insight Fidelity Evaluation.} Finally, we compare the synthetic structured insight with structured insights generated by other structured insight paradigms, shown in \Cref{tab:div}. First, the empirical results demonstrate that hierarchical approaches are more effective than flat structures at preserving the original data distribution. Comparing the flat algorithms (Flat-Close and Flat-Open) with the hierarchical algorithms (Hie-Close, Hie-Open, and our proposed \system), we observe that the flat baselines consistently yield higher divergences across all datasets and privacy budgets. 

Second, among hierarchical approaches, no single open- or closed-set strategy dominates across all datasets and privacy budgets. Hie-Close relies on model priors for taxonomy construction, yielding stable categories but limiting data-adaptive discovery. For instance, on CFPB, its 1-way divergence remains around 0.14 as the privacy budget increases, but \system's decreases from 0.13 to 0.11. In contrast, Hie-Open adapts its categories directly to the private data, but becomes more sensitive to small privacy budgets because low-frequency categories are harder to recover through private discovery. This effect is more pronounced on diverse datasets such as WildChat and ChatDoctor. By combining data-driven discovery with prior-based supplementation, \system provides a more robust compromise.

\subsection{Challenging Queries for \system}

We further examine queries that are difficult for \system to answer correctly. On SynChat\_topic, where conversations may traverse multiple topics, we ask \system to capture each session's topic-switching path at $\varepsilon=10$. As shown in \Cref{fig:topic number}, the released distribution is biased toward shorter paths and fails to recover longer sequences. This discrepancy arises before tabular synthesis: variable-length paths yield a large, sparse categorical domain, making long sequences difficult to recover via private domain discovery, while prior-based completion is also ill-suited to such highly data-dependent combinations.

\begin{figure}[!t]
\centering
\includegraphics[width=\columnwidth]{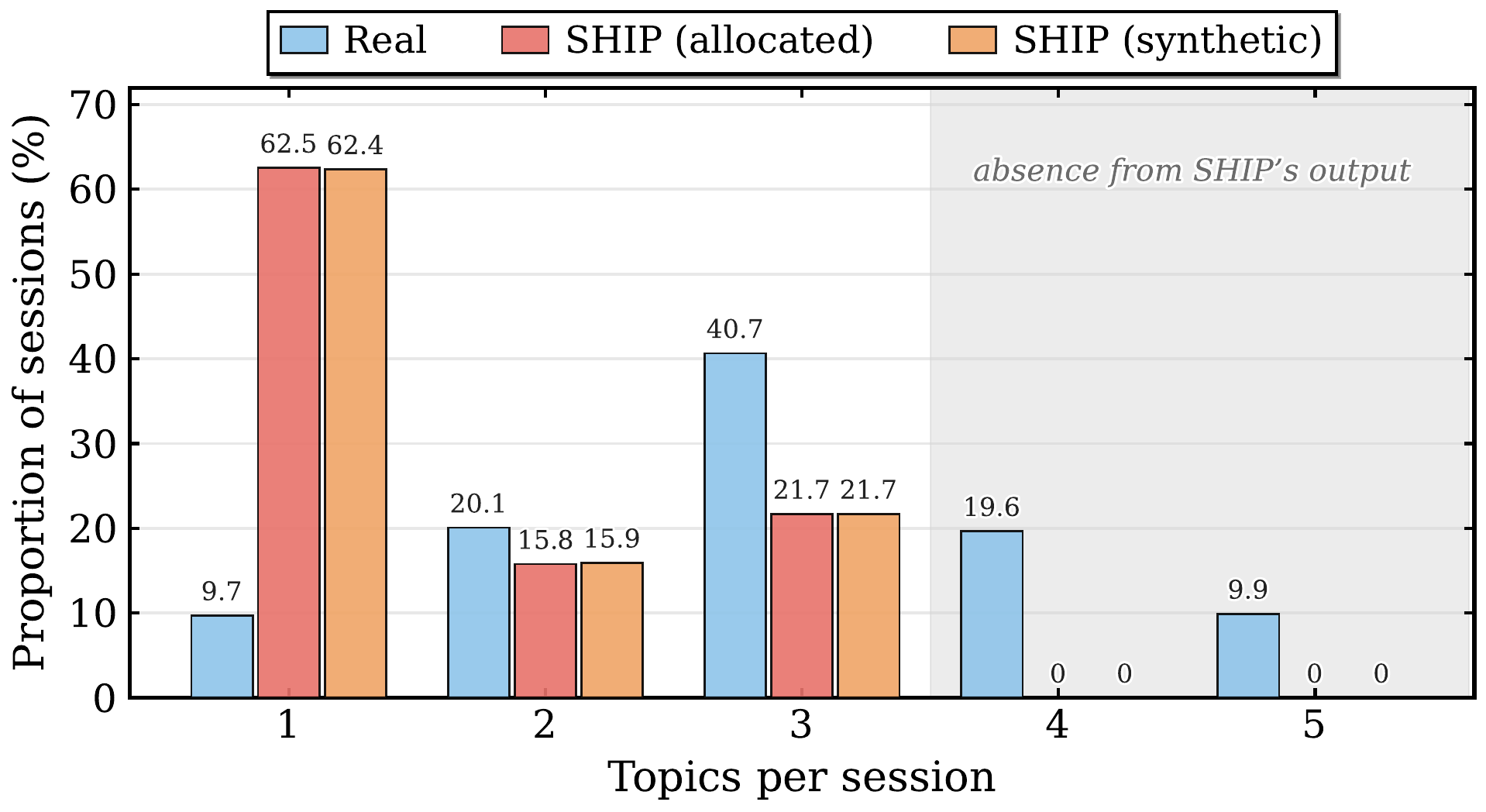}
\caption{Distribution of the number of topics per session in SynChat\_topic, obtained from real dataset and \system ($\varepsilon=10$). ``Allocated'' is \system's per-record category assignment, ``synthetic'' refers to the output of its DP tabular synthesizer.}
\label{fig:topic number}
\end{figure}

This example highlights the importance of matching the insight representation to the structure required by the analytical workload. Encoding complete sequences as atomic categories can make private discovery difficult, even when their constituent topics are common. More broadly, workloads involving ordering, relationships between entities, or open-ended compositional patterns motivate representations that preserve these structures without requiring every complete configuration to be discovered as a separate category. Exploring such representations is a promising direction for extending private insight generation to a broader range of analytical workloads.

\section{Conclusion}

In this work, we study DP insight extraction from textual corpora through a query-answering perspective. We introduce \system, a framework for extracting structured insights under DP, and develop a unified evaluation protocol that measures how well different insight representations support common analytical needs. Our experiments show that structured insights can provide strong query-answering utility and also identify queries that remain challenging for current insight-generation paradigms, highlighting opportunities for richer representations and extraction mechanisms. We hope this work provides a common basis for developing and evaluating future private insight systems.

\section*{Limitations}

Our work has two main limitations. First, our query evaluation focuses primarily on dataset-level statistical and analytical queries, including counting, comparison, aggregation, and multi-attribute analysis, and does not establish effectiveness for arbitrary queries. Even within this scope, our experiments identify queries that SHIP’s current DP structured representation does not support well, such as those involving long topic-switching sequences. These findings highlight the need for alternative representations or extraction mechanisms when the required information is difficult to capture through categorical attributes.

Second, we evaluate insights primarily through their ability to answer a specified analytical workload. This captures one dimension of utility and does not establish \system's superiority across all aspects of data analysis; other representations may offer complementary benefits beyond this evaluation. Moreover, both our ground-truth annotations and \system also use three-level taxonomies, which may favor \system in reference-based comparisons. We have not quantified this potential bias. Future work should assess robustness to alternative taxonomy depths and reference schemas, alongside broader analytical benefits such as discovering unanticipated patterns and supporting evolving information needs.

\section*{Ethical Considerations}

Our experiments use publicly available datasets and synthetically generated data, without collecting additional private user data. SHIP is intended to support aggregate analysis rather than identify individual users. We do not release identifiable user records or information, and the insights released by SHIP are generated under DP.

% \section*{Acknowledgments}

% This document has been adapted
% by Steven Bethard, Ryan Cotterell and Rui Yan
% from the instructions for earlier ACL and NAACL proceedings, including those for
% ACL 2019 by Douwe Kiela and Ivan Vuli\'{c},
% NAACL 2019 by Stephanie Lukin and Alla Roskovskaya,
% ACL 2018 by Shay Cohen, Kevin Gimpel, and Wei Lu,
% NAACL 2018 by Margaret Mitchell and Stephanie Lukin,
% Bib\TeX{} suggestions for (NA)ACL 2017/2018 from Jason Eisner,
% ACL 2017 by Dan Gildea and Min-Yen Kan,
% NAACL 2017 by Margaret Mitchell,
% ACL 2012 by Maggie Li and Michael White,
% ACL 2010 by Jing-Shin Chang and Philipp Koehn,
% ACL 2008 by Johanna D. Moore, Simone Teufel, James Allan, and Sadaoki Furui,
% ACL 2005 by Hwee Tou Ng and Kemal Oflazer,
% ACL 2002 by Eugene Charniak and Dekang Lin,
% and earlier ACL and EACL formats written by several people, including
% John Chen, Henry S. Thompson and Donald Walker.
% Additional elements were taken from the formatting instructions of the \emph{International Joint Conference on Artificial Intelligence} and the \emph{Conference on Computer Vision and Pattern Recognition}.

% Bibliography entries for the entire Anthology, followed by custom entries
%\bibliography{custom,anthology-overleaf-1,anthology-overleaf-2}

% Custom bibliography entries only
\bibliography{ref}

\appendix
\crefalias{section}{appendix}
\crefalias{subsection}{appendix}

\begin{table*}[!t]
\centering
\small
\caption{The summary of four investigated datasets. Mid-tokens refer to the median number of tokens, and 95\%-tokens mean the 95\% percentile of token numbers. The token size statistics are calculated using \texttt{Sentence-T5-base}.}
\label{tab:dataset_summary_detailed}
\begin{tabularx}{0.96\textwidth}{llllX}
\toprule
\textbf{Dataset} & \textbf{Records} & \textbf{Mid-Tokens} & \textbf{95\%-Tokens} & \textbf{Open-ended Workload} \\
\midrule
WildChat   & 838k & 1,061 & 4,463 & \{What are the users' primary task intents and topic domains, what specific constraints or output formatting requirements are requested, and what types of solutions or response strategies do the assistants provide to resolve the requests?\} \\
\midrule
ChatDoctor & 112k &   252 &   531 & \{What are the users' primary symptoms and chief complaints, and for those symptoms, what diagnostic tests or examinations do the doctors suggest, and what are the doctors' diagnosis conclusions?\} \\
\midrule
CFPB       &  10k &   192 &   885 & \{What are the primary problems or grievances encountered by the users, and what specific inquiries, requests, or corrective actions do the users seek to resolve the issues?\} \\
\midrule
SynChat    &  10k &   217 &   636 & [Same as WildChat] \\
\midrule
SynChat\_topic & 10k & 532 & 879 & \{Which topics come up in each conversation session? For sessions covering several topics, what are the most common complete paths from the first topic to the last, and how often does each occur?\} \\
\bottomrule
\end{tabularx}
\end{table*}

\begin{algorithm}[!t]
\caption{Complete Hierarchical Hybrid DP Taxonomy Building}
\label{alg:hybrid discovery full}
\KwIn{Records $\mathcal{D}=\{x_1,\dots,x_N\}$, Dataset description $S_D$, Max depth $d$, Attribute $A$, Privacy parameters $(\rho_{\text{step}},\delta_{\text{step}})$, Taxonomy model $\mathcal{M}_{\text{tax}}$, Classification model $\mathcal{M}_{\text{cls}}$.}
\KwOut{Taxonomy $\mathcal{T}_A$, Internal record paths $\{y_i\}_{i=1}^N$.}

$\rho_{\text{open}}=0.9\rho_{\text{step}},\quad \rho_{\text{close}}=0.1\rho_{\text{step}}$\;
$\sigma_{\text{open}}=1/\sqrt{2\rho_{\text{open}}},\quad \sigma_{\text{close}}=1/\sqrt{2\rho_{\text{close}}}$\;
Compute $\tau_{\text{open}}$ and $\tau_{\text{close}}$\;
Initialize tree-like domain $\Omega_A$ with root $\bot$\;

\For{$l \leftarrow 1$ \KwTo $d$, each level $l-1$ category $c_p$ in $\Omega_A$}{
    Skip if $c_p$ is \texttt{other} or \texttt{none}\;
    $\mathcal{D}_p \leftarrow$ records assigned to $c_p$\;

    \ForEach{$x_i\in\mathcal{D}_p$}{
        $y_i \leftarrow \mathcal{M}_{\text{cls}}(x_i,A,c_p)$\;
    }

    $\mathcal{C}_{\text{open}} \leftarrow \emptyset$\;
    \ForEach{$c\in\{y_i\}_{x_i\in\mathcal{D}_p}$}{
        $\tilde n_c \leftarrow \sum_i\mathbf{1}\{y_i=c\}+\mathcal N(0,\sigma_{\text{open}}^2)$\;
        \If{$\tilde n_c\ge\tau_{\text{open}}(\sigma_{\text{open}},\delta_{\text{step}})$}{
            $\mathcal{C}_{\text{open}} \leftarrow \mathcal{C}_{\text{open}}\cup\{c\}$\;
        }
    }

    $\mathcal{C}_{\text{comp}} \leftarrow \mathcal{M}_{\text{tax}}(A,S_D,c_p,\mathcal{C}_{\text{open}})$\;
    \ForEach{$x_i\in\mathcal{D}_p$}{
        \If{$y_i\notin\mathcal{C}_{\text{open}}$}{
            $y_i \leftarrow \mathcal{M}_{\text{cls}}(x_i,\mathcal{C}_{\text{open}}\cup\mathcal{C}_{\text{comp}})$\;
        }
    }

    $\mathcal{C}_{\text{surv}} \leftarrow \{\texttt{other}\}$\;
    \ForEach{$c\in\mathcal{C}_{\text{open}}\cup\mathcal{C}_{\text{comp}}$}{
        $\tilde n_c \leftarrow \sum_i\mathbf{1}\{y_i=c\}+\mathcal N(0,\sigma_{\text{close}}^2)$\;
        \If{$\tilde n_c\ge\tau_{\text{close}}$}{
            $\mathcal{C}_{\text{surv}} \leftarrow \mathcal{C}_{\text{surv}}\cup\{c\}$\;
        }
    }
    Map excluded assignments to \texttt{other}\;
    Add $\mathcal{C}_{\text{surv}}$ as children of $c_p$ in $\Omega_A$ and allocate $\mathcal{D}_p$ using $\{y_i\}$\;
}
De-duplicate and refine categories in $\Omega_A$\;
\Return{$\{A:\Omega_A\},\{y_i\}_{i=1}^N$}\;
\end{algorithm}

\section{Supplementary Related Work}

In this section, we discuss more about the relationship between \system and other related works, mainly including DP text analysis and DP synthetic data generation.

\noindent\textbf{Related Text Analytics Paradigms.}
RAG-based methods answer queries by retrieving relevant passages from the underlying corpus, with recent work extending this paradigm to private retrieval \citep{koga2025privacypreservingretrievalaugmentedgenerationdifferential,wu2025privateragansweringmultiplequeries,mori-etal-2026-differentially} and large-scale conversational analysis \citep{zhang2025chatlogscollectiveinsights}. Structured analytics systems instead use Text-to-SQL, semantic operators, or taxonomy induction to execute analytical workloads over unstructured data \citep{yu2019spiderlargescalehumanlabeleddataset,Lam_2024,patel2025semanticoperatorsdeclarativemodel,anderson2024designllmpoweredunstructuredanalytics,shankar2025docetlagenticqueryrewriting,sadia2026squidsynthesizingrelationaldatabases}. Unlike these query-time analysis paradigms, we focus on releasing a reusable insight representation under DP, which can support various downstream anaysis.

\noindent\textbf{DP Synthetic Text Generation.}
Recent DP synthetic text frameworks achieve state-of-the-art utility by extracting structured insights as an intermediate representation rather than replacing direct insight extraction. Specifically, ACTG-ARL \citep{hu2025actg} and MAPLE \citep{chien2026maple} leverage the structured insights to steer DP fine-tuning or private evolution to generate synthetic text. DP synthetic text generation targets a broader release that can support downstream tasks beyond a predefined analytical workload, whereas we focus on compact representations for data insight. The two directions are therefore complementary.

\section{Detailed Experimental Settings}
\label{app:detail exp set}

\subsection{Detailed Implementation of \system}

\noindent \textbf{Detailed Taxonomy Building.} We provide a complete version of \Cref{alg:hybrid discovery} here in \Cref{alg:hybrid discovery full}. It introduces how we conduct the private partition selection to discover categories and allocate supplementary categories after that. There are several main differences between it and \Cref{alg:hybrid discovery}. 

First, we will skip those parent categories that are ``none'' or ``other'', which refer to the case that the attribute is not mentioned or the real category is too rare, respectively. After allocation, all filtered categories will be renamed to ``other'' as a default category. Second, we have a filter threshold for supplementary categories in order to reduce domain complexity. In our experiment, we set $\tau_{\text{close}}=10$ for simplicity. Finally, we also utilize an LLM to de-duplicate categories across levels before we output the taxonomy.

\noindent \textbf{Detailed Privacy Budget Split.} Given a target $(\varepsilon,\delta)$-DP guarantee, we set the privacy parameters $(\rho_{\text{aim}},\rho_{\text{tax}},\delta_{\text{tax}})$ in \Cref{alg:ship} as follows:
\[
\begin{aligned}
    \delta_\text{tax} &= \delta/3,\\
    \rho_\text{aim} &= \rho_\text{tax} = \frac{1}{2}f_{\rho}(\varepsilon, 2\delta/3),
\end{aligned}
\]
Here, $f_{\rho}$ numerically inverts the zCDP-to-DP conversion to obtain a zCDP budget compatible with the target DP parameters.

\subsection{Detailed Baseline Introduction}

In our work, we mainly consider 5 insight algorithms that can provide strict DP guarantees. We acknowledge that there are other algorithms, but most of them cannot provide DP protection during their pipelines so we omit them in our experiments. Here is a detailed introduction of them, including introduction and implementation details. 

\begin{enumerate}[leftmargin=*, label=\textbullet]
    \item \textbf{Urania.} Urania is a cluster-based DP insight extraction framework \citep{liu2025uraniadifferentiallyprivateinsights}, which can also be regarded as a DP version of Clio \citep{tamkin2024clioprivacypreservinginsightsrealworld}. First, an LLM extracts semantic facets (intent summaries and keywords) for each record. Next, dense sentence embeddings of the summaries are computed and clustered via $k$-means to partition the dataset. For each cluster, a differentially private keyword histogram is constructed via the Gaussian/Laplacian mechanism to privately select top keywords and noisy cluster counts. Finally, an LLM synthesizes cluster titles and high-level descriptions conditioned strictly on the DP keyword histograms and subsampled record summaries. In our work, we further optimize this pipeline by involving workload into the prompt and incorporating hierarchical clustering.

    \item \textbf{Flat-Close.} This approach represents a set of methods, including non-DP structured insight extraction used by OpenAI \citep{chatterji2025chatgpt} and DP structured metadata synthesis used in Hu et al.'s work \citep{hu2025actg}. In this method, we first define a taxonomy by an LLM without any access to the raw data, and annotate each data. Finally, a tabular synthesis mechanism is employed to output a privatized structured insight.
    
    \item \textbf{Flat-Open.} This baseline serves as a direct extension of existing open-set structured summarization methods \citep{cheu2025provablyprivateanalyticsinsights}. Existing literature predominantly formulates open-set discovery along isolated attribute dimensions, releasing categories that survive differential private partition selection alongside independent 1-dimensional noisy histograms. To preserve baseline fidelity without introducing ad-hoc synthetic modeling unstudied in prior work, our implementation evaluates each attribute independently and constructs multi-attribute insights via separate marginal releases. This faithful setting directly captures the structural limitations of current open-set approaches when applied to joint, multi-attribute analytical workloads.
    
    \item \textbf{Hie-Close \& Hie-Open.} These two baselines are variants of \system. Hie-Close relies completely on a pre-defined taxonomy by removing the open discovery step in \system, and Hie-Open discovers categories only by partition selection by removing the closed supplementary category generation step in \system.
\end{enumerate}

\subsection{Datasets Information}

There are five datasets investigated in our experiments, including three real-world datasets and two artificial datasets. For workload, we generate most content-related queries and form the workload descriptions. A summarization is provided in \Cref{tab:dataset_summary_detailed}. 
% Because WildChat and ChatDoctor contain more than 100k samples, considering the price of calling API and experiment efficiency, we apply Poisson sampling and privacy amplification with sample rates $r = 2\%$ and $10\%$, respectively. 
Following established text-insight systems that rely on subsampling to reconcile large-scale corpora with LLM API token limits (e.g., Clio \citep{tamkin2024clioprivacypreservinginsightsrealworld} and Urania \citep{liu2025uraniadifferentiallyprivateinsights}), we apply uniform Poisson subsampling on corpora exceeding 100k records (sampling ratios $r = 2\%$ for WildChat and $r = 10\%$ for ChatDoctor). Beyond computational feasibility, Poisson subsampling is theoretically grounded in differential privacy via privacy amplification by subsampling \citep{balle2018privacyamplificationsubsamplingtight}: executing an $(\epsilon_{\text{sub}}, \delta_{\text{sub}})$-DP mechanism on a Poisson subsample with sampling ratio $r$ satisfies global $(\epsilon, \delta)$-DP on the full corpus, where $\epsilon = \ln(1 + r(e^{\epsilon_{\text{sub}}} - 1))$ and $\delta = r \cdot \delta_{\text{sub}}$. The reported privacy parameter $\epsilon$ denotes the unified global end-to-end privacy budget. 

The two artificial datasets are generated label-first: we sample the ground truth and then synthesize text conditioned on it, so the true annotation of every record is known by construction rather than estimated after the fact. The pipeline has three stages. First, for each annotation attribute, an LLM builds a hierarchical taxonomy from the attribute description and a description of the target corpus, which we optionally prune. Second, we sample each record's ground-truth labels from that taxonomy under a controlled distribution; this is the only stage at which the two corpora differ. SynChat draws one hierarchical category path per attribute, autoregressively conditioned on an anchor attribute, so that correlations between attributes are planted rather than incidental. SynChat\_topic instead draws a sequence of leaves per record, a walk whose length is drawn from an explicit distribution and whose steps follow a transition matrix we control, so that a conversation changes topic a known number of times and in a known direction. Finally, an LLM renders the sampled labels into a conversation.

\subsection{QA Metric Introduction}

\noindent \textbf{Workload Access}. We assume that all methods have the same access to the workload files during insight generation, which are shown in \Cref{tab:dataset_summary_detailed}.   

\noindent \textbf{Auxiliary Agent.} To perform closed-loop evaluation, we deploy an auxiliary agent implemented on the Antigravity SDK \citep{antigravity2026} with the Gemini-3.5-Flash-Lite model. The agent is explicitly constrained to operate in a controlled offline setting: default capabilities and external retrieval tools are completely masked, leaving only an isolated Python execution environment with bounded file access to prevent data leakage and ensure fair benchmarking.

\begin{table*}[!t]
\centering
\small
\caption{Several examples of deterministic queries on SynChat with standard answers. We present one example for each query category (single- or multi-attributes $\times$ query type).}
\label{tab:query_answer_examples_alt}
\renewcommand{\arraystretch}{1.4}
\begin{tabularx}{\textwidth}{p{3.2cm} X}
\toprule
\textbf{Category} & \textbf{Query Content} \\
\midrule
Single-Attribute Yes/No & \textbf{Query:} Across all user interactions, do direct information retrieval and text transformation tasks collectively constitute more than 65\% of all primary user intents? \newline
\textbf{Answer:} Yes. \\
\midrule
Single-Attribute Comparison & \textbf{Query:} Across user conversation intents, do users more frequently ask the assistant to modify, summarize, or transform existing text, or to generate original creative writing pieces? \newline
\textbf{Answer:} Transforming and modifying existing text. \\
\midrule
Single-Attribute Ask for Number & \textbf{Query:} Within user requests aimed at modifying and transforming existing text, what proportion specifically request content summarization into condensed formats such as bullet point extractions, executive summaries, or single-sentence abstracts? \newline
\textbf{Answer:} 0.3994 (39.94\%). \\
\midrule
Multi-Attribute Yes/No & \textbf{Query:} Among user requests focused on software coding tasks (such as code generation, bug fixing, performance optimization, and unit testing), does the computer programming topic domain account for more than 35\% of all conversations? \newline
\textbf{Answer:} Yes. \\
\midrule
Multi-Attribute Comparison & \textbf{Query:} When users engage with topics in computer programming, do their requests more frequently involve text transformation tasks (such as summarization, translation, paraphrasing, and proofreading) or creative writing tasks (such as narrative fiction, poetry, scriptwriting, and roleplay)? \newline
\textbf{Answer:} Text transformation tasks. \\
\midrule
Multi-Attribute Ask for Number & \textbf{Query:} In discussions focused on the mathematics domain (covering geometry, calculus, number theory, and probability and statistics), what proportion of assistant responses adopt a clarification-seeking strategy (such as disambiguation, slot filling, preference elicitation, or intent confirmation)? \newline
\textbf{Answer:} 0.2003 (20.03\%). \\
\bottomrule
\end{tabularx}
\end{table*}

\noindent \textbf{Deterministic QA.} Deterministic evaluation queries are generated by our auxiliary agent from structured summaries of the raw text. Following prior work \citep{zhang2025chatlogscollectiveinsights}, we first apply a non-DP summarization algorithm to distill the raw corpus. Conditioned on an open-ended user intent, the agent generates deterministic queries paired with canonical ground-truth answers. Specifically, we generate 100 queries across each of three distinct question types: \textit{Yes/No}, \textit{Comparative}, \textit{Numerical}, with each type evenly partitioned between single-feature and multi-feature queries. This yields 300 deterministic queries per dataset. Several examples are shown in \Cref{tab:query_answer_examples_alt}.

For WildChat, since we do not have ground-truth metadata, we generate it using an LLM-summarization algorithm. Each record is independently categorized using a powerful LLM, and all categories are aggregated and de-duplicated. We also conduct this process in a hierarchical structure. We acknowledge that this process may introduce bias due to pipeline design, model capacity, and randomness, which is also a limitation of prior work \citep{zhang2025chatlogscollectiveinsights}.

The queries are answered using an offline agent. The agent is provided with the full set of generated insight artifacts and tasked with answering each benchmark query through closed-loop data inspection. To evaluate correctness, an LLM judge compares the agent's predicted response against the predefined ground-truth answer, assigning a discrete score on a 1-to-5 scale. Because these queries have standard and deterministic answers, we only evaluate once for each algorithm.

\begin{table*}[!t]
\centering
\small
\caption{Case study from SynChat ($\varepsilon=10$) illustrating categorization across algorithms. \system discovers hierarchical paths across all four orthogonal dimensions, whereas \texttt{Flat-Close} and \texttt{Flat-Open} use 1-level labels, and \texttt{Urania} outputs an unstructured text cluster descriptor.}
\label{tab:synchat_multirow_case_study}
\renewcommand{\arraystretch}{1.25}
\begin{tabularx}{\textwidth}{p{0.48\textwidth} X}
\toprule
\textbf{Raw Dialogue} & \textbf{Algorithm \& Categorization} \\
\midrule

\multirow{22}{0.48\textwidth}{%
\textbf{user}: Can you proofread this calculus passage and fix the punctuation errors, while also organizing your final explanations using a markdown table? Here is the draft: The derivative in differential calculus measures the rate of change. It derives from the Latin word differentia meaning difference. \par\medskip

\textbf{ai}: Here is the corrected text with proper punctuation, followed by a breakdown of the etymological origin presented as a markdown table: \par\smallskip
The derivative, in differential calculus, measures the rate of change. It derives from the Latin word \textit{*differentia*}, meaning difference. \par\smallskip
| Term | Language of Origin | Etymological Meaning | \par
| :--- | :--- | :--- | \par
| Derivative | Latin (*derivativus*) | To draw off, lead down, or derive from a source | \par
| Differential | Latin (*differentia*) | A variance, distinction, or difference between quantities | \par\medskip

\textbf{user}: That table is very clear, thank you. Could you double-check if any extra punctuation marks are missing around the word differentia? \par\medskip

\textbf{ai}: The punctuation is correct as written. To be thorough, the term differential calculus itself originates from the Latin \textit{*differentia*}, where the root relates to the infinitesimal differences studied in calculus.
}

& \textbf{Ground Truth} \par
\textbullet~Intent: \texttt{text transformation > proofreading > punctuation correction} \par
\textbullet~Topic: \texttt{mathematics > calculus > differential calculus} \par
\textbullet~Constraint: \texttt{output\_format > tabular\_format > markdown\_table} \par
\textbullet~Strategy: \texttt{direct\_information > factual\_definition > etymological\_origin} \\
\cmidrule{2-2}
& \textbf{\system} \par
\textbullet~Intent: \texttt{text editing > proofreading \& correction > punctuation correction} \par
\textbullet~Topic: \texttt{stem > mathematics in stem > mathematical calculus} \par
\textbullet~Constraint: \texttt{formatting and structure > tabular formatting > markdown table} \par
\textbullet~Strategy: \texttt{direct answer > correction > linguistic correction} \\
\cmidrule{2-2}
& \textbf{Flat-Close} \par
\textbullet~Intent: \texttt{text processing} \par
\textbullet~Topic: \texttt{stem} \par
\textbullet~Constraint: \texttt{formatting and structure} \par
\textbullet~Strategy: \texttt{structured presentation} \\
\cmidrule{2-2}
& \textbf{Flat-Open} \par
\textbullet~Intent: \texttt{text editing} \par
\textbullet~Topic: \texttt{stem} \par
\textbullet~Constraint: \texttt{formatting} \par
\textbullet~Strategy: \texttt{step-by-step explanation} \\
\cmidrule{2-2}
& \textbf{Urania} \par
\textbullet~Cluster: \texttt{Technical Mathematics, Precise Translation, and Backend Development} \par
\textbullet~Keywords: \texttt{mathematical definition, literal translation, ...} \par
\textbullet~Description: \texttt{User queries spanning advanced mathematics, precise linguistic translation, and backend development. Users seek rigorous mathematical definitions in differential geometry and calculus, literal translations under length constraints, and ...} \\
\bottomrule
\end{tabularx}
\end{table*}

\begin{table*}[!t]
\centering
\small
\caption{Case studies of deterministic query evaluation on SynChat ($\varepsilon=10$) across different structural paradigms. We provide representative examples: (1) \textit{Macro Shallow Query}: Top-level distributions are accurately captured by all methods. (2) \textit{Hierarchical Query}: Fine-grained subcategories are correctly resolved by \system via hierarchical taxonomy, whereas flat baselines collapse and fail. (3) \textit{Cross-Attribute Joint Query}: Tabular representations with orthogonal attribute axes (\system and \texttt{Flat-Close}) accurately evaluate bivariate cross-tabulations, whereas 1D semantic clustering (\texttt{Urania}) entangles dimensions and fails. (4) \textit{Highly Detailed Query}: Excessively fine-grained joint conditions lead to failure across all methods.}
\label{tab:query_case_studies}
\renewcommand{\arraystretch}{1.25}
\setlength{\tabcolsep}{4pt}
\begin{tabularx}{\textwidth}{p{6.2cm} X X X X}
\toprule
\textbf{Query \& Ground Truth} & \textbf{\system{} (Ours)} & \textbf{Flat-Close} & \textbf{Flat-Open} & \textbf{Urania} \\

\midrule
\textbf{Query:} Across the entire dataset, when users specify constraints on their requests, do they more frequently demand specific output formatting requirements or enforce explicit length limits? \vspace{0.3em}\newline
\textbf{Answer:} Output formatting &
\textbf{Formatting rules} \newline {\textbf{Score: 5/5}} \checkmark &
\textbf{Formatting rules} \newline {\textbf{Score: 5/5}} \checkmark &
\textbf{Formatting rules} \newline  {\textbf{Score: 5/5}} \checkmark &
\textbf{Formatting rules} \newline  {\textbf{Score: 5/5}} \checkmark \\

\midrule
\textbf{Query:} Within web development conversations in computer programming, what proportion of user requests specify tabular presentation formats in their output constraints? \vspace{0.3em}\newline
\textbf{Answer:} 16.07\%  &
\textbf{15.91\%} \newline  {\textbf{Score: 5/5}} \checkmark &
\textit{``I don't know''} \newline {\textbf{Score: 1/5}} $\times$ &
\textit{``I don't know''} \newline {\textbf{Score: 1/5}} $\times$ &
\textit{``I don't know''} \newline {\textbf{Score: 1/5}} $\times$ \\

\midrule
\textbf{Query:} Across user requests with a primary intent of software coding, what proportion specify output format constraints? \vspace{0.3em}\newline
\textbf{Answer:} 41.77\% &
\textbf{43.15\%} \newline {\textbf{Score: 5/5}} \checkmark &
\textbf{39.80\%} \newline {\textbf{Score: 5/5}} \checkmark &
\textbf{21.40\%} \newline {\textbf{Score: 2/5}} $\times$ &
\textbf{62.80\%} \newline {\textbf{Score: 2/5}} $\times$  \\ 

\midrule 
\textbf{Query:} Within web development topic discussions under computer programming, what proportion of queries concentrate on frontend component development and UI construction? \vspace{0.3em}\newline
\textbf{Answer:} 6.73\% &
\textit{``I don't know''} \newline {\textbf{Score: 1/5}} $\times$ &
\textit{``I don't know''} \newline {\textbf{Score: 1/5}} $\times$ &
\textit{``I don't know''} \newline {\textbf{Score: 1/5}} $\times$ &
\textit{``I don't know''} \newline {\textbf{Score: 1/5}} $\times$ \\
\bottomrule
\end{tabularx}
\end{table*}

\begin{table*}[!t]
\centering
\caption{Open-ended query-answering evaluation across analytical conclusion statements on {WildChat} and {SynChat} under varying privacy budgets $\varepsilon$. We report Recall ($\text{Score} \ge 4$, $\% \uparrow$), Precision ($\text{Score} \ge 4$, $\% \uparrow$), and primary $F_1$ score ($\text{Score} \ge 4$, $\% \uparrow$), together with their standard deviations.}
\label{tab:open_qa_eval}
\small
\setlength{\tabcolsep}{6.0pt}
\begin{tabular}{ll ccc ccc}
\toprule
\multirow{2}{*}[-0.6ex]{$\varepsilon\quad\;\;$} & \multirow{2}{*}[-0.6ex]{\textbf{Algorithm}} & \multicolumn{3}{c}{\textbf{WildChat}} & \multicolumn{3}{c}{\textbf{SynChat}} \\
\cmidrule(lr){3-5} \cmidrule(lr){6-8}
 & & Recall & Precision & $F_1$ & Recall & Precision & $F_1$ \\
\midrule
\multirow{4}{*}[-0.3ex]{1.0} 
 & \texttt{Urania}                                     & $5.6_{\pm 4.3}$ & $67.6_{\pm 20.7}$ & $9.3_{\pm 6.4}$ & $17.7_{\pm 21.7}$ & $65.3_{\pm 19.1}$ & $22.9_{\pm 22.4}$ \\
 & \texttt{Flat-Close}                              & $76.2_{\pm 10.4}$ & $70.0_{\pm 17.3}$ & $71.3_{\pm 10.0}$ & $42.3_{\pm 14.6}$ & $53.7_{\pm 5.3}$ & $45.9_{\pm 8.7}$ \\
 & \texttt{Flat-Open}                                 & $76.4_{\pm 9.0}$ & $68.9_{\pm 17.7}$ & $71.1_{\pm 12.7}$ & $41.6_{\pm 13.8}$ & $65.1_{\pm 16.8}$ & $49.3_{\pm 13.1}$ \\
 & \system\ (Ours)                                   & $\textbf{86.3}_{\pm 5.1}$ & $\textbf{80.8}_{\pm 8.5}$ & $\textbf{83.2}_{\pm 4.5}$ & $\textbf{66.2}_{\pm 12.9}$ & $\textbf{69.2}_{\pm 14.3}$ & $\textbf{67.0}_{\pm 11.7}$ \\
\midrule
\multirow{4}{*}[-0.3ex]{5.0} 
 & \texttt{Urania}                                     & $18.4_{\pm 21.0}$ & $63.3_{\pm 17.2}$ & $22.6_{\pm 23.3}$ & $19.9_{\pm 12.8}$ & $\textbf{81.3}_{\pm 6.6}$ & $29.5_{\pm 16.9}$ \\
 & \texttt{Flat-Close}                              & $76.2_{\pm 9.5}$ & $79.2_{\pm 7.8}$ & $77.5_{\pm 7.8}$ & $35.5_{\pm 20.0}$ & $52.9_{\pm 8.0}$ & $40.4_{\pm 16.4}$ \\
 & \texttt{Flat-Open}                                 & $79.9_{\pm 6.0}$ & $\textbf{80.6}_{\pm 16.1}$ & $79.6_{\pm 9.8}$ & $43.8_{\pm 7.5}$ & $59.5_{\pm 15.9}$ & $48.8_{\pm 5.9}$ \\
 & \system\ (Ours)                                   & $\textbf{88.6}_{\pm 7.3}$ & $80.3_{\pm 5.3}$ & $\textbf{84.2}_{\pm 5.9}$ & $\textbf{73.2}_{\pm 8.3}$ & $75.6_{\pm 7.2}$ & $\textbf{73.6}_{\pm 2.1}$ \\
\midrule
\multirow{4}{*}[-0.3ex]{10.0} 
 & \texttt{Urania}                                     & $17.2_{\pm 10.9}$ & $54.6_{\pm 18.3}$ & $24.4_{\pm 13.5}$ & $22.5_{\pm 14.4}$ & $70.2_{\pm 11.6}$ & $32.1_{\pm 18.7}$ \\
 & \texttt{Flat-Close}                              & $73.6_{\pm 13.3}$ & $79.0_{\pm 7.6}$ & $75.7_{\pm 9.1}$ & $38.6_{\pm 11.9}$ & $44.7_{\pm 6.8}$ & $40.5_{\pm 8.7}$ \\
 & \texttt{Flat-Open}                                 & $80.3_{\pm 6.0}$ & $72.0_{\pm 9.4}$ & $75.4_{\pm 5.9}$ & $37.0_{\pm 14.7}$ & $\textbf{72.2}_{\pm 9.0}$ & $47.6_{\pm 14.1}$ \\
 & \system\ (Ours)                                   & $\textbf{84.6}_{\pm 1.5}$ & $\textbf{88.5}_{\pm 9.5}$ & $\textbf{86.3}_{\pm 5.3}$ & $\textbf{77.5}_{\pm 12.5}$ & $69.6_{\pm 10.2}$ & $\textbf{71.6}_{\pm 3.1}$ \\
\bottomrule
\end{tabular}
\end{table*}

\begin{table*}[!t]
\centering
\small
\caption{Compacted analytical reports generated by \system and \texttt{Flat-Close} ($\varepsilon=10.0$) on SynChat and representative extracted atomic statements (Round 1 of evaluation). We showcase representative statements demonstrating macro distributions, fine-grained leaf details, and cross-attribute interactions.}
\label{tab:open_report_ship_single}
\renewcommand{\arraystretch}{1.2}
\begin{tabularx}{\textwidth}{p{0.48\textwidth} p{0.48\textwidth}}
\toprule
\textbf{Generated Report (Compacted)} & \textbf{Extracted Statement Examples} \\
\midrule
\textbf{\system:} \par\smallskip
\texttt{\#\# Executive Summary} \par
\texttt{Empirical investigation reveals a structured ecosystem of user inquiries, cognitive intents, domain distributions, and assistant strategies ...} \par\smallskip
\texttt{\#\# Macro Domain Breakdown} \par
\texttt{* **User Intent:** Anchored in **Information Seeking** ($\approx 43\%$), followed by **Text Editing** ($\approx 22\%$), **Creative Writing** ($\approx 18\%$), and **Coding** ($\approx 15\%$) ...} \par\smallskip
\texttt{\#\# In-Depth Single- \& Multi-Attribute Analysis} \par
\texttt{* **Intent-Domain Synergy:** STEM-driven information seeking represents the single largest cross-attribute intersection in the dataset ...} \par\smallskip
\texttt{\#\# Strategic Takeaways} \par
\texttt{1. **Technical Specialization:** STEM ($\approx 68\%$) and Information Seeking ($\approx 43\%$) mandate rigorous factual verification and step-by-step derivations ...}
&
\noindent \textbullet\; Information seeking accounts for roughly 43\% of all conversational interactions. \par\smallskip
\noindent \textbullet\; STEM disciplines command roughly 68\% of all inquiries. \par\smallskip
\noindent \textbullet\; Within computer science topics, software engineering and database administration dominate the volumetric load. \par\smallskip
\noindent \textbullet\; In mathematics, geometry and calculus inquiries drive detailed analytical requests. \par\smallskip
\noindent \textbullet\; STEM-driven information seeking represents the single largest intersection in the dataset. \par\smallskip
\noindent \textbullet\; Technical coding tasks frequently incorporate rigorous formatting constraints, such as requiring code blocks or diff patches, alongside computational constraints. \par\smallskip
... \textit{(69 total decomposed statements)}
\\
\midrule
\textbf{Flat-Close:} \par\smallskip
\texttt{\#\# Executive Summary} \par
\texttt{Investigation into chat conversations reveals dialogues spanning cognitive goals, thematic domains, and operational constraints ...} \par\smallskip
\texttt{\#\# Macro Domain Breakdown} \par
\texttt{* **User Intent:** Information seeking and content generation represent the dominant volume drivers, accounting for nearly half of all prompts ...} \par\smallskip
\texttt{\#\# In-Depth Single- \& Multi-Attribute Analysis} \par
\texttt{* **Cross-Tabulation:** Coding prompts within STEM almost universally trigger executable code delivery with step-by-step debugging explanations ...} \par\smallskip
\texttt{\#\# Strategic Takeaways} \par
\texttt{1. **Programmable Interfaces:** High prevalence of formatting constraints indicates users view chat assistants as programmable tools ...}
&
\noindent \textbullet\; Information seeking and content generation represent the most dominant volume drivers, accounting for nearly half of all user prompts. \par\smallskip
\noindent \textbullet\; Technical and professional domains (STEM and productivity) dominate the thematic distribution across topical domains. \par\smallskip
\noindent \textbullet\; Formatting and structure constraints, such as markdown tables and JSON schemas, represent a major constraint category. \par\smallskip
\noindent \textbullet\; When users initiate programming tasks within the STEM domain, assistants almost universally respond with executable code blocks and step-by-step explanations. \par\smallskip
\noindent \textbullet\; Creative or professional writing within business or humanities simultaneously invokes tone/style and length constraints in roughly two-thirds of cases. \par\smallskip
... \textit{(59 total decomposed statements)}
\\
\bottomrule
\end{tabularx}
\end{table*}

\begin{table*}[!t]
\centering
\small
\caption{Compacted analytical reports generated by \texttt{Flat-Open} and \texttt{Urania} ($\varepsilon=10.0$) on SynChat and representative extracted atomic statements (Round 1 of evaluation). We showcase representative statements illustrating open extracted distributions, entangled macro clusters, and keyword-level granular topics.}
\label{tab:open_report_ss_urania}
\renewcommand{\arraystretch}{1.2}
\begin{tabularx}{\textwidth}{p{0.48\textwidth} p{0.48\textwidth}}
\toprule
\textbf{Generated Report (Compacted)} & \textbf{Extracted Statement Examples} \\
\midrule
\textbf{Flat-Open:} \par\smallskip
\texttt{\#\# Executive Summary} \par
\texttt{Dialogues are anchored in information seeking and creative writing, with STEM serving as the thematic backbone capturing over two-thirds of allocations ...} \par\smallskip
\texttt{\#\# Single-Attribute Distribution Breakdown} \par
\texttt{* **User Intent:** Information Seeking accounts for $\approx 30.3\%$, followed by Creative Writing ($\approx 17.1\%$), Reasoning ($\approx 11.0\%$), and Code Generation ($\approx 10.6\%$) ...} \par\smallskip
\texttt{\#\# Multi-Attribute Relationships \& Cross-Tabulations} \par
\texttt{* **Intent-Domain Dynamics:** Within Code Generation and Debugging, the STEM domain captures over $85\%$ of instances ...} \par\smallskip
\texttt{\#\# Strategic Takeaways} \par
\texttt{1. **Ecosystem Specialization:** Rigorous technical problem-solving (STEM + Code Generation) coexists with extensive creative workflows ...}
&
\noindent \textbullet\; Information seeking represents the single largest intent category, accounting for approximately 30.3\% of all interactions. \par\smallskip
\noindent \textbullet\; STEM topics dominate the entire conversation corpus, accounting for approximately 68.8\% of all topic assignments. \par\smallskip
\noindent \textbullet\; Formatting constraints govern approximately 43.1\% of all interactions where user constraints are specified. \par\smallskip
\noindent \textbullet\; Within code generation and debugging, the STEM domain captures over 85\% of instances. \par\smallskip
\noindent \textbullet\; Humanities topics capture roughly 20\% to 22\% of creative and editing workflows. \par\smallskip
... \textit{(67 total decomposed statements)}
\\
\midrule
\textbf{Urania:} \par\smallskip
\texttt{\#\# Executive Summary} \par
\texttt{The dataset demonstrates an orientation toward complex problem-solving where technical computing, formal sciences, and creative tasks intersect ...} \par\smallskip
\texttt{\#\# Macro Domain Breakdown} \par
\texttt{* **Root Cluster 94 ($\approx 29.4\%$):** Software Engineering, Mathematics, Linguistics, and Creative Writing (merges poetry, Python scripts, and transliteration) ...} \par\smallskip
\texttt{\#\# Granular Topic \& Keyword Analysis} \par
\texttt{* **Mathematical Reasoning:** Prominence of `euclidean geometry` ($\approx 243.5$) and `differential calculus` ($\approx 176.5$), coupled with `markdown table` ($\approx 151.5$) ...} \par\smallskip
\texttt{\#\# Strategic Takeaways} \par
\texttt{1. **Multi-Constraint Integration:** Users rarely separate technical computation from presentation formatting; math queries demand code and markdown tables ...}
&
\noindent \textbullet\; Software engineering, mathematics, linguistics, and creative writing represent the largest single macro-domain at approximately 29.4\% of total volume. \par\smallskip
\noindent \textbullet\; Advanced mathematics, technical computing, and structured formatting account for approximately 26.5\% of all interactions. \par\smallskip
\noindent \textbullet\; Mathematical inquiries in Root Cluster 1 focus on Euclidean and analytic geometry, differential calculus, and the Riemann zeta function. \par\smallskip
\noindent \textbullet\; Enterprise data architecture topics in Root Cluster 63 include PostgreSQL database migration, schema design, and query optimization. \par\smallskip
\noindent \textbullet\; Key thematic drivers in technical workflow clusters include version control, git reset, backend API development, and web state management. \par\smallskip
... \textit{(47 total decomposed statements)}
\\
\bottomrule
\end{tabularx}
\end{table*}

\noindent \textbf{Open-Ended QA.} The pipeline of open-ended QA evaluation is a symmetric, two-pass verification process designed to assess both the completeness and factual fidelity of dataset summarizations. In the first pass (Recall), an autonomous code agent synthesizes a comprehensive analytical report from the ground-truth summarization conditioned on a workload query, which is subsequently decomposed into independent atomic factual statements. These ground-truth statements are then validated against the target summarization by executing closed-loop verification code on the target data files, calculating the proportion of ground-truth findings accurately captured (Recall). Symmetrically, in the second pass (Precision), the agent generates an analytical report directly from the target summarization, decomposes it into candidate statements, and validates them against the ground truth to quantify the factual accuracy of the target's conclusions (Precision). Finally, the F1 score is computed to provide a unified benchmark score balancing insight coverage and factual grounding. We set the LLM temperature to 0 to reduce generation variability. To account for residual variability observed in long-form analytical reports, we run the open-ended QA evaluation five times on the released insights and report the average scores.

\subsection{Textual Table Sinkhorn Divergence.} 
\label{sec:app sink div}

To evaluate how faithfully the synthetic table $\mathcal{D}_{\mathrm{syn}}$ preserves the categorical distributions and attribute correlations of the ground-truth table $\mathcal{D}_{\mathrm{real}}$, we measure the semantic distance between empirical distributions via entropic optimal transport. Note that here, to make our ground-truth $\mathcal{D}_{\mathrm{real}}$ more accurate, we use phrases or sentences summarized by an LLM as its categories.

For a given attribute (or attribute pair in 2-way divergence evaluation), let $\mathbf{p}$ and $\mathbf{q}$ denote the empirical category frequency distributions over unique vocabularies. We map categories into dense semantic representations and construct a ground cost matrix $\mathbf{C} \in [0, 1]^{m \times n}$ via normalized cosine distance, 
\[
    C_{ij} = 1 - \frac{1}{2}(\cos(e_i, e_j) + 1).
\]
For two-way divergence, we define 
\[
    C_{ij} = 1 - \frac{1}{4}(\cos(e^{(1)}_i, e^{(1)}_j) + 1)(\cos(e^{(2)}_i, e^{(2)}_j) + 1).
\]
We then compute the entropic optimal transport cost:
\begin{equation*}
\mathcal{W}_\varepsilon(\mathbf{p}, \mathbf{q}) = \min_{\mathbf{T} \in \Pi(\mathbf{p}, \mathbf{q})} \langle \mathbf{T}, \mathbf{C} \rangle - \varepsilon H(\mathbf{T}),
\end{equation*}
where $\Pi(\mathbf{p}, \mathbf{q})$ is the transport polytope and $H(\mathbf{T})$ is the entropy regularizer ($\varepsilon = 0.1$). To eliminate entropic self-bias, we report the debiased Sinkhorn divergence:
\begin{equation*}
\mathcal{S}_\varepsilon(\mathbf{p}, \mathbf{q}) = \mathcal{W}_\varepsilon(\mathbf{p}, \mathbf{q}) - \frac{1}{2}\mathcal{W}_\varepsilon(\mathbf{p}, \mathbf{p}) - \frac{1}{2}\mathcal{W}_\varepsilon(\mathbf{q}, \mathbf{q}).
\end{equation*}
We evaluate $\mathcal{S}_\varepsilon$ across both 1-column marginal distributions and 2-column joint attribute distributions, reporting the average score across columns (lower indicates superior distributional alignment).

\subsection{Implementation Details}

We implement \system with depth $d=3$ by default. We set $\tau_\text{close}=10$ for simplicity. All algorithms are implemented on the Gemini API. We use Gemini-3.5-Flash-Lite for all classification and keyword extraction tasks, and use Gemini-3.5-Flash as taxonomy generators. The supplementary categories are generated via Simula \citep{davidson2026reasoningdrivensyntheticdatageneration}. For all algorithm, we set global $\varepsilon in \{1, 5, 10\}$ and $\delta = 1/(n \ln(n))$.

\section{Supplementary Experiment Results}
\label{sec:appendix result}

\subsection{Examples of Categorization}

In \Cref{tab:synchat_multirow_case_study}, we present a raw data record with its ground-truth meta data and the categorization results obtained from different algorithms ($\varepsilon=1$). From this table, the first observation is that \system provides a more granular classification than Flat-Open and Flat-Close. For instance, \system propose ``mathematical calculus'' as the topic, while Flat-Open and Flat-Close only categorize the data into ``STEM''. 

In terms of Urania, we can observe that the cluster information has its own strengths and weaknesses. It can identify ``differential geometry and calculus'', which is a key information that all other algorithms do not include. However, the cluster also contain some necessary information, such as translation and backend development. We believe this is a core limitation of cluster-based approaches, which cannot distinguish semantically similar records.

\subsection{Deterministic QA Results}

We provide several representative examples in \Cref{tab:query_case_studies} to help illustrate the capacity of different approaches. The first query is asking about macro information, which can be easily answered by all methods. The second query is asking for detailed information, and thus only \system provides a relatively accuracte answer. The third query is a cross-attribute query. Therefore, only methods that have strict tabular modeling can answer it. The final example is a highly detailed query. This query is asking a tailor distribution information under a sub-category, which is overly detailed than the capacity of any methods. Therefore, all methods fail to return an accurate answer.

\begin{table*}[!t]
\centering
\small
\caption{Case study from SynChat\_topic ($\varepsilon=10$). The conversation moves through several unrelated domains. The second column includes the real categorizations and those by \system; ``$\rightarrow$'' denotes a topic switch.}
\label{tab:synchat_switch_case_study}
\renewcommand{\arraystretch}{1.25}
\begin{tabularx}{\textwidth}{p{0.48\textwidth} p{0.48\textwidth}}
\toprule
\textbf{Raw Dialogue (Compressed)} & \textbf{Algorithm \& Categorization} \\
\midrule

\textbf{user:} I am looking at an H-NMR spectrum for an unknown organic compound with the molecular formula C4H8O2. There is a triplet at 1.2 ppm, a quartet at 4.1 ppm, a singlet at 2.0 ppm \ldots\; Can you help me piece this together? \par\smallskip
\ldots\par\smallskip
\textbf{user:} On a completely different matter, we are heading into trial preparations for a breach of contract case, and opposing counsel just served a massive batch of unorganized emails. What is the best way to handle this electronic discovery dump? \par\smallskip
\ldots\par\smallskip
\textbf{user:} This legal wrangling is tedious \ldots\; Let's talk about something entirely different. What are the key turning points in the early political career of Alexander Hamilton? 
\par\medskip

&
\textbf{Ground Truth} \par
\textbullet~Topic path (level-1 categories only): \texttt{chemistry $\rightarrow$ litigation $\rightarrow$ history} \par
% \textbullet~Topic count: \texttt{3}

\par\medskip\medskip\medskip
\textbf{\system} \par
\textbullet~Topic path (level-1 categories only): \texttt{chemistry $\rightarrow$ law $\rightarrow$ history} \par
% \textbullet~Topic count: \texttt{3} 
% \par\smallskip
\\

\midrule

\textbf{user:} I have a monolithic function in my Python backend \ldots How should I break this down to follow the single responsibility principle? \par
\ldots\par\smallskip
\textbf{user:} While sorting out those backend services, I also need to record our petty cash expenditures and reconcile the February bank statement. \par
\ldots\par\smallskip
\textbf{user:} Speaking of discrepancies, my Diels-Alder reaction keeps yielding the wrong stereoisomer. How can I control endo and exo selectivity? \par
\ldots\par\smallskip
\textbf{user:} That chemistry problem is getting too complicated. Instead, I need to draft a request to the judge to throw out the plaintiff's complaint. \par
\ldots\par\smallskip
\textbf{user:} This legal drafting is overwhelming \ldots drop this entire line of questioning. How do modern historians analyze the biases in 19th-century colonial archives?
\par\medskip

&

\textbf{Ground Truth} \par
\textbullet~Topic path (level-1 categories only): \texttt{programming $\rightarrow$ finance
$\rightarrow$ chemistry $\rightarrow$ litigation $\rightarrow$ history} \par
% \textbullet~Topic count: \texttt{5}
% \par\medskip\noindent\rule{\linewidth}{0.4pt}\par\smallskip

\par\medskip\medskip\medskip

\textbf{\system} \par
\textbullet~Topic path (level-1 categories only): \texttt{programming} \par
% \textbullet~Topic count: \texttt{1}
% \par\smallskip\noindent\rule{\linewidth}{1.0pt}\par\smallskip

\\
\bottomrule
\end{tabularx}
\end{table*}

\subsection{Open-Ended QA Results}

The full numerical results of \Cref{fig:open_qa_eval} is provided in \Cref{tab:open_qa_eval}. To better illustrate how we conduct the open-ended QA evaluation, we present several open-ended query answer report examples and the statements extracted from them in \Cref{tab:open_report_ship_single} and \Cref{tab:open_report_ss_urania}. 

As reflected in the reports and extracted statements, each method captures different structural characteristics of the conversational dataset. \system provides high-level macro distributions across orthogonal axes (e.g., Information Seeking at $\approx 43\%$, STEM disciplines at $\approx 68\%$), resolves fine-grained subcategories within hierarchical leaves (e.g., software engineering and database administration under computer science; geometry and calculus under mathematics), and delineates bivariate cross-attribute intersections (e.g., STEM-driven information seeking, coding workflows governed by structural formatting constraints). 

Flat-Close and Flat-Open capture 1-level macro distributions and bivariate cross-tabulations (e.g., technical STEM queries triggering code delivery), but remain restricted to top-level categories without resolving hierarchical sub-branches. Note that Flat-Open does not model the joint distribution. Therefore, its performance in multivariate QA is highly dependent on whether the attributes are not strongly related.

Finally, Urania represents the workload via 1D semantic clustering, yielding composite macro-clusters that jointly entangle topics, intents, and formatting rules (e.g., Root Cluster 94 combining creative writing, Python scripting, and length constraints at $\approx 29.4\%$) alongside keyword-level frequency statistics for specific technical operations.

\begin{table}[!t]
\centering
\footnotesize
\caption{Average category counts and allocated records across attributes on WildChat.}
\label{tab:wildchat_category_breakdown_compact}
\renewcommand{\arraystretch}{1.15}
\setlength{\tabcolsep}{2.5pt}
\begin{tabular}{cc ccc}
\toprule
\textbf{$\varepsilon$} & \textbf{Level} & \textbf{Open-Set} & \textbf{Close-Set} & \textbf{Other \& None} \\
\midrule
\multirow{3}{*}{$1.0$} 
  & L0 & 13.5 (83.9\%) & 4.5 (2.5\%) & 1.8 (13.7\%) \\
  & L1 & 35.8 (80.2\%) & 43.0 (8.3\%) & 23.0 (11.5\%) \\
  & L2 & 29.8 (46.7\%) & 152.0 (37.6\%) & 74.0 (15.7\%) \\
\midrule
\multirow{3}{*}{$5.0$} 
  & L0 & 16.3 (86.5\%) & 4.3 (1.0\%) & 1.8 (12.5\%) \\
  & L1 & 52.8 (89.3\%) & 37.3 (5.8\%) & 24.8 (5.0\%) \\
  & L2 & 55.0 (55.5\%) & 137.5 (32.1\%) & 93.0 (12.3\%) \\
\midrule
\multirow{3}{*}{$10.0$} 
  & L0 & 20.8 (86.3\%) & 2.8 (0.6\%) & 2.3 (13.1\%) \\
  & L1 & 86.3 (86.1\%) & 30.3 (4.6\%) & 29.5 (9.4\%) \\
  & L2 & 70.8 (51.7\%) & 147.3 (33.0\%) & 119.8 (15.4\%) \\
\bottomrule
\end{tabular}
\end{table}

\begin{table*}[!t]
\centering
\caption{Record-level pairwise preference evaluation of \system\ against each baseline across four datasets under varying privacy budgets $\varepsilon$. Each cell is the average judge preference score on a 1-5 scale oriented toward \system: $5$ strongly prefers \system, $3$ is neutral, and $1$ strongly prefers the baseline, so values above $3$ favour \system\ ($\uparrow$). Score denotes the workload-unaware judgement and $\text{Score}_w$ the workload-aware judgement.}
\label{tab:pairwise_preference_eval}
\small
\setlength{\tabcolsep}{5.0pt}
\begin{tabular}{ll cc cc cc cc}
\toprule
\multirow{2}{*}[-0.6ex]{$\varepsilon\quad\;\;$} & \multirow{2}{*}[-0.6ex]{\textbf{Comparison}} & \multicolumn{2}{c}{\textbf{WildChat}} & \multicolumn{2}{c}{\textbf{SynChat}} & \multicolumn{2}{c}{\textbf{CFPB}} & \multicolumn{2}{c}{\textbf{ChatDoctor}} \\
\cmidrule(lr){3-4} \cmidrule(lr){5-6} \cmidrule(lr){7-8} \cmidrule(lr){9-10}
 & & Score & $\text{Score}_w$ & Score & $\text{Score}_w$ & Score & $\text{Score}_w$ & Score & $\text{Score}_w$ \\
\midrule
\multirow{3}{*}[-0.3ex]{1.0}
 & \system\ vs.\ \texttt{Urania}      & $4.570$ & $4.714$ & $4.786$ & $4.781$ & $4.208$ & $4.236$ & $4.541$ & $4.689$ \\
 & \system\ vs.\ \texttt{Flat-Close}  & $3.912$ & $3.754$ & $4.158$ & $4.155$ & $4.122$ & $4.182$ & $3.435$ & $3.757$ \\
 & \system\ vs.\ \texttt{Flat-Open}   & $3.793$ & $3.768$ & $4.136$ & $4.159$ & $3.893$ & $4.083$ & $3.645$ & $3.851$ \\
\midrule
\multirow{3}{*}[-0.3ex]{5.0}
 & \system\ vs.\ \texttt{Urania}      & $4.478$ & $4.687$ & $4.819$ & $4.772$ & $4.481$ & $4.427$ & $4.463$ & $4.603$ \\
 & \system\ vs.\ \texttt{Flat-Close}  & $4.004$ & $3.944$ & $4.418$ & $4.413$ & $4.315$ & $4.378$ & $3.499$ & $3.839$ \\
 & \system\ vs.\ \texttt{Flat-Open}   & $3.887$ & $3.908$ & $4.267$ & $4.293$ & $4.149$ & $4.290$ & $3.716$ & $3.964$ \\
\midrule
\multirow{3}{*}[-0.3ex]{10.0}
 & \system\ vs.\ \texttt{Urania}      & $4.474$ & $4.639$ & $4.787$ & $4.745$ & $4.548$ & $4.428$ & $4.390$ & $4.564$ \\
 & \system\ vs.\ \texttt{Flat-Close}  & $3.882$ & $3.754$ & $4.281$ & $4.233$ & $4.375$ & $4.437$ & $3.443$ & $3.803$ \\
 & \system\ vs.\ \texttt{Flat-Open}   & $3.898$ & $3.887$ & $4.218$ & $4.230$ & $4.195$ & $4.355$ & $3.740$ & $3.962$ \\
\bottomrule
\end{tabular}
\end{table*}

\subsection{Contributions of Open-Set and Closed-Set Categories}
\label{app:cat proportion}

\Cref{tab:wildchat_category_breakdown_compact} reports the average number of categories and their corresponding record proportions across attributes, with percentages shown in parentheses. At shallower levels (L0 and L1), open-set categories account for over 80\% of records, while closed-set categories account for less than 10\%. At the deepest level (L2), however, the contribution of closed-set categories increases substantially to 32.1--37.6\%, while that of open-set categories drops to 46.7--55.5\%. These results suggest that open-set partition selection effectively captures prevalent categories at coarse granularities, whereas closed-set supplementation becomes increasingly important for maintaining record coverage at finer granularities.

\subsection{Challenging Query Examples}

\Cref{tab:synchat_switch_case_study} contrasts two conversations from SynChat\_topic, both annotated by \system in the same run at $\varepsilon=10$. The pair is matched: both are cross-domain topic switches drawn from the same taxonomy and released under the same schema, so the only variable is the number of topics the user visits. Ground truth is defined over level-1 categories, so we compare \system's release at level 1 as well, discarding the subcategory suffix it appends below that level.

In the first conversation, the user moves from NMR spectroscopy to electronic discovery to Alexander Hamilton's early political career, and \system recovers all three topics in the correct order. The one apparent mismatch, ``{law}'' where the ground truth reads
``{litigation}'', is a difference of granularity rather than of referent:
``{law}'' is exactly the parent domain of ``{litigation}'' in the
generating taxonomy, so both names denote the same segment. The second conversation visits five domains under the same switching mechanics, and \system only releases the single topic ``{programming}'': four of the five leave no trace in the output.

\subsection{Preference Evaluation Results}

We additionally adopt the summary preference metric used in Urania's evaluation \citep{liu2025uraniadifferentiallyprivateinsights}. Instead of scoring each output in isolation, this metric presents an LLM judge with the outputs of two algorithms for the same record and elicits a relative preference. We compare \system\ against each baseline in turn on a $1$-$5$ scale oriented toward \system, where $5$ strongly prefers \system, $3$ is neutral, and $1$ strongly prefers the baseline, so values above $3$ favour \system. We elicit two judgements per record: the workload-aware judgement asks which output is more valuable for answering the workload query, and the workload-unaware judgement asks which output better describes the raw text. 

Structured insight methods release tabular metadata whose correspondence with the raw records is altered by privatization. To compare all methods in a common format, we first use an LLM to verbalize each tabular record as a natural-language summary. We acknowledge that this conversion may affect the fairness of the evaluation. Urania already releases natural-language cluster summaries and is used as is. We randomise the presentation order on every call to control for position bias and evaluate every record of each dataset. Results are reported in \Cref{tab:pairwise_preference_eval}. 

All results lie above the neutral value of $3$, so \system\ is preferred over every baseline, on every dataset, at every privacy budget, and under both criteria. The baseline ordering is stable: {Flat-Close} and {Flat-Open} are the strongest competitors and differ little from one another, while \texttt{Urania} is the weakest everywhere. The gap to {Urania} is largely one of granularity rather than annotation quality, since it assigns each record the summary of its enclosing cluster while the structured methods emit record-specific descriptions. This also explains why the gap varies with the corpus: it is large on the diverse {WildChat} and {ChatDoctor} texts but nearly vanishes on the narrow, homogeneous complaints of {CFPB}, where a shared cluster summary is almost as informative as a record-level one. The two flat baselines are therefore the stricter points of comparison.

% Two further trends are visible. First, the advantage of \system\ over the flat baselines peaks at the intermediate budget $\varepsilon=5$ on {WildChat}, {SynChat} and {ChatDoctor}, suggesting that its hierarchical label space needs a sufficient budget to survive the injected noise but yields less relative benefit once the budget is generous enough for flat releases to become accurate on their own; {CFPB}, with its narrower topical range, instead improves monotonically with $\varepsilon$. Second, the two judgement criteria agree closely on {WildChat} and {SynChat}, which share a broad workload, whereas the workload-aware score is clearly higher on \textsc{ChatDoctor} and {CFPB}, whose workloads are tightly scoped; the benefit of workload-conditioned extraction thus appears to grow with the specificity of the analytical query.

\begin{table}[!t]
\centering
\footnotesize
\caption{Deterministic QA results with different QA agent models on SynChat.}
\label{tab:agent_model_ablation}
\renewcommand{\arraystretch}{0.9}
\setlength{\tabcolsep}{2.5pt}
\begin{tabular*}{\columnwidth}{@{\extracolsep{\fill}}ccccc@{}}
\toprule
\multirow{2}{*}{\textbf{$\varepsilon$}} & \multicolumn{2}{c}{\textbf{Gemini}} & \multicolumn{2}{c}{\textbf{Sonnet}} \\
\cmidrule(lr){2-3}\cmidrule(lr){4-5}
& {Avg Score} & {Acc (\%)} & {Avg Score} & {Acc (\%)} \\
\midrule
1.0  & 3.97 & 71.3 & 4.00 & 71.0 \\
5.0  & 4.04 & 72.0 & 3.99 & 71.7 \\
10.0 & 3.82 & 67.0 & 3.81 & 68.0 \\
\bottomrule
\end{tabular*}
\end{table}

\begin{table*}[!t]
\centering
\small
\caption{Example QA agent execution trace involving scope resolution and category aggregation across taxonomy levels. Resource loading and unrelated code and outputs are omitted.}
\label{tab:agent_example}
\renewcommand{\arraystretch}{1.25}
\setlength{\tabcolsep}{0.01\textwidth}
\begin{tabularx}{\textwidth}{p{0.48\textwidth} p{0.48\textwidth}}
\toprule
\multicolumn{2}{p{0.96\textwidth}}{
\textbf{Query:} Among user conversation constraints specifying output formatting, do list format requests (such as bulleted lists, numbered lists, checklists, and description lists) exceed 45\% of all output formatting constraints?
} \\
\midrule
\textbf{Code} & \textbf{STDOUT} \\
\midrule

\multicolumn{2}{l}{\textbf{Step 1: Identify the reference population and inspect its categories.}} \\[3pt]
\begin{minipage}[t]{\linewidth}
\raggedright\ttfamily\scriptsize
formatting\_rows = df[\par
\hspace*{1em}df['user\_constraints (level 1)'] == 'formatting'\par
]\par
print(f"Total formatting constraint rows: \{len(formatting\_rows)\}")\par
print(\par
\hspace*{1em}formatting\_rows['user\_constraints (level 2)']\par
\hspace*{1em}.value\_counts(dropna=False)\par
)\par
print(\par
\hspace*{1em}formatting\_rows['user\_constraints (level 3)']\par
\hspace*{1em}.value\_counts(dropna=False)\par
)
\end{minipage}
&
\begin{minipage}[t]{\linewidth}
\raggedright\ttfamily\scriptsize
Total formatting constraint rows: 4677\par
\medskip
user\_constraints (level 2)\par
bullet points: 241\par
numbered list formatting: 107\par
checklist formatting: 100\par
[other categories omitted]\par
\medskip
user\_constraints (level 3)\par
bulleted list: 349\par
numbered list structured output: 247\par
general bullet points: 178\par
checklist structured output: 100\par
markdown checklist: 91\par
numbering style constraint: 73\par
[other categories omitted]
\end{minipage}
\smallskip
\\
\midrule

\multicolumn{2}{l}{\textbf{Step 2: Combine matching categories across levels and calculate the proportion.}} \\[3pt]
\begin{minipage}[t]{\linewidth}
\raggedright\ttfamily\scriptsize
df\_formatting = formatting\_rows\par
list\_formatting\_count = len(df\_formatting[\par
\hspace*{1em}df\_formatting['user\_constraints (level 2)'].isin([\par
\hspace*{2em}'bullet points',\par
\hspace*{2em}'numbered list formatting',\par
\hspace*{2em}'checklist formatting'\par
\hspace*{1em}]) |\par
\hspace*{1em}df\_formatting['user\_constraints (level 3)']\par
\hspace*{1em}.str.contains(\par
\hspace*{2em}'bullet|list|checklist|numbered',\par
\hspace*{2em}case=False, na=False\par
\hspace*{1em})\par
])\par
print(\par
\hspace*{1em}f"List formatting constraints count: "\par
\hspace*{1em}f"\{list\_formatting\_count\} / "\par
\hspace*{1em}f"\{len(formatting\_rows)\} = "\par
\hspace*{1em}f"\{list\_formatting\_count/len(formatting\_rows)*100:.2f\}\%"\par
)
\end{minipage}
\smallskip
&
\begin{minipage}[t]{\linewidth}
\raggedright\ttfamily\scriptsize
List formatting constraints count: 1144 / 4677 = 24.46\%
\end{minipage}
\\
\midrule

\multicolumn{2}{p{0.96\textwidth}}{
\textbf{Final answer:} No. List format requests account for 1,144 out of 4,677 output formatting constraint rows in the released DP synthetic table (24.46\%), which does not exceed 45\%.
} \\
\bottomrule
\end{tabularx}
\end{table*}

\subsection{Robustness to the QA Agent Model}

For experimental convenience, our main experiments use models from the Gemini family for both insight extraction and QA evaluation. To examine whether using the same model family in both stages substantially affects our evaluation, we replace the QA agent model with Claude Sonnet and repeat the deterministic QA evaluation on SynChat. We use the same released insight resources and retain all other evaluation settings, so that this comparison isolates the effect of the QA agent model. As shown in \Cref{tab:agent_model_ablation}, replacing Gemini with Claude Sonnet changes the overall deterministic QA score by at most 0.05 and accuracy by at most 1.0\% across the three privacy budgets. These results suggest that \system’s deterministic QA performance on SynChat is not substantially affected by this change in the QA agent model.

\begin{figure*}[!t]
\centering
\begin{tcolorbox}[
    colback=gray!10,          % 10% gray background
    colframe=gray!40,         % Subtle border
    arc=2mm,                  % Rounded corners (set to 0mm for square)
    boxrule=0.6pt,            % Border line width
    width=\textwidth,         % Use \textwidth for double-column spans
    left=8pt, right=8pt, top=8pt, bottom=8pt
]
\small
You are an expert data analysis assistant.
The text you are classifying is part of a dataset described as: \{dataset\_description\}.
We have a set of annotation attributes to classify for the text below by discovering appropriate categories within a \{max\_depth\}-level hierarchy.\par\medskip

(for each annotation attribute i = 1..n) Annotation \{i\}: "\{annotation\_name\}"\par
Description: \{annotation\_description\}\par
(if not root level) Parent Category Allocation Path for this Annotation: \{parent\_path\}\par\medskip

Hierarchy Context:\par
- Total Taxonomy Depth: \{max\_depth\} levels\par
- Current Classification Level: Level \{current\_level\} of \{max\_depth\}\par
(optional) - Hierarchy Examples (for granularity control only; you should propose category only based on the provided data):\par
\quad * \{annotation\_name\}: \{example\_path\} \par\medskip

GLOBAL RULES FOR OPEN CATEGORY DISCOVERY: \par
1. For each annotation, analyze the text and propose a concise, high-level, canonical category/subcategory name appropriate for Level \{current\_level\} of \{max\_depth\} that accurately captures the primary core topic of the text.\par
2. (root level) If the text does not contain information about the annotation, or if the annotation is not applicable or specified in the text, select "none". (e.g., If the attribute is "Format constraints" or "User constraints", you should output "none" instead of descriptive phrases like "no constraints" or "unconstrained" if the raw text does not specify any constraints).\par
2. (deeper levels) For subcategory classification under a parent category, if the text does not contain more details beyond the parent category, or if you think the parent category is enough for the record or not appropriate for further classification, select "none". Do NOT over-classify.\par
3. Category / Subcategory naming rules:\par
   \quad- Every subcategory must logically entail all parent categories in its parent category allocation path.\par
   \quad- DO NOT append parent category names, ancestor category names, or the annotation name. Just regard them as a known context. NEVER use any parent category in the path as a subcategory.\par
   \quad- Be concise and simple without sacrificing precise meaning (1-2 words in lowercase English are preferred (e.g., "refund", "late fee"); but you can use more words if they are really necessary to better describe the data ).\par
   \quad- Strictly ban compound category/subcategory styles (such as "a and b" type). Focus on only one main object or concept that fits the raw text best.\par
   \quad- Strip identifying details (proper nouns, personal names, numbers, specific dates, brand names); map to canonical conceptual category names.\par
4. Return the result ONLY as a valid JSON object mapping each annotation name to the proposed category name. For example:\par
  \quad\{\par
  \quad\quad  "annotation\_name\_1": "billing",\par
  \quad\quad  "annotation\_name\_2": "refund"\par
  \quad\}\par
5. Do not write explanations, outer markdown code block wrappers (like ```json), or any extra text.\par
6. The output must be in lowercase English.\par\medskip

Text to classify: \{text\}\par\medskip

Your JSON output (invalid JSON will be rejected):
\end{tcolorbox} 

\caption{The prompts used for proposing categories for multiple attributes.}
\label{fig:propose prompt}
\end{figure*}

\begin{figure*}[!t]
\centering
\begin{tcolorbox}[
    colback=gray!10,          % 10% gray background
    colframe=gray!40,         % Subtle border
    arc=2mm,                  % Rounded corners (set to 0mm for square)
    boxrule=0.6pt,            % Border line width
    width=\textwidth,         % Use \textwidth for double-column spans
    left=8pt, right=8pt, top=8pt, bottom=8pt
]
\small
You are an expert conversation analysis and data re-classification assistant.
The text you are re-classifying is part of a dataset described as: \{dataset\_description\}.
We have a set of annotation attributes and their candidate categories (which include confirmed high-frequency categories and initialized complementary categories).\par\medskip

(for each annotation attribute i = 1..n) Annotation \{i\}: "\{annotation\_name\}"\par
Description: \{annotation\_description\}\par
(if not root level) Parent Category Allocation Path for this Annotation: \{parent\_path\}\par
(root level) Candidate Categories:\par
(deeper levels) Candidate Subcategories under Parent:\par
   \quad- \{candidate\_1\}\par
   \quad- \{candidate\_2\}\par
   \quad- ...\par\medskip

Hierarchy Context:\par
- Total Taxonomy Depth: \{max\_depth\} levels\par
- Current Classification Level: Level \{current\_level\} of \{max\_depth\}\par\medskip

RE-CLASSIFICATION RULES: \par
1. (root level) For each annotation, analyze the text and select the single category from Candidate Categories that best fits the primary core meaning/topic of the text.\par
2. (root level) If none of the candidate categories fit the text, select "other".\par
3. (root level) If the text does not contain information about the annotation, or if the annotation is not applicable or specified in the text, select "none" (e.g., output "none" instead of "no constraints" or "unconstrained").\par
1. (deeper levels) For each annotation, analyze the text and select the single category from Candidate Subcategories that best fits the text within the parent category context.\par
2. (deeper levels) If none of the candidate subcategories fit the text, select "other".\par
3. (deeper levels) If the text does not contain more details beyond the parent category, or if the parent category is sufficient and not appropriate for further classification, select "none".\par
4. Return the result ONLY as a valid JSON object mapping each annotation name to the chosen category/subcategory name (or "other" / "none"). For example:\par
  \quad\{\par
  \quad\quad  "annotation\_name\_1": "billing",\par
  \quad\quad  "annotation\_name\_2": "other"\par
  \quad\}\par
5. Do not write explanations, outer markdown wrappers (like ```json), or any extra text.\par
6. The output must be in lowercase English.\par\medskip

Text to re-classify: \{text\}\par\medskip

Your JSON output (invalid JSON will be rejected):
\end{tcolorbox} 

\caption{The prompts used for re-classifying text based on complementary categories.}
\label{fig:reclassify prompt}
\end{figure*}

\begin{figure*}[!t]
\centering
\begin{tcolorbox}[
    colback=gray!10,          % 10% gray background
    colframe=gray!40,         % Subtle border
    arc=2mm,                  % Rounded corners (set to 0mm for square)
    boxrule=0.6pt,            % Border line width
    width=\textwidth,         % Use \textwidth for double-column spans
    left=8pt, right=8pt, top=8pt, bottom=8pt
]
\small
You are analyzing a dataset described as: \{dataset\_description\}.
Your task is to generate a COMPLEMENTARY taxonomy for the annotation attribute named '\{annotation\_name\}' (description: \{description\}).\par
(root level) Generate a FLAT (single-level, depth=1) set of top-level categories.\par
(deeper levels) Generate specific sub-categories to expand the parent category '\{parent\_path\}'.\par\medskip

(if any passed categories at this node) ALREADY ESTABLISHED / PASSED CATEGORIES (root level) or SUBCATEGORIES UNDER PARENT '\{parent\_path\}' (deeper levels):\par
The following categories have already been discovered and passed the filter:\par
   \quad- \{passed\_category\_1\}\par
   \quad- \{passed\_category\_2\}\par
   \quad- ...\par\medskip

RULES for complementary category generation:\par
   \quad- LOGICAL ENTAILMENT (deeper levels): Every sub-category must logically entail the parent category '\{parent\_path\}'. NEVER include any parent or ancestor category in the path as a sub-category.\par
   \quad- MUTUAL EXCLUSIVITY WITH ESTABLISHED CATEGORIES: The new complementary categories must be strictly MUTUALLY EXCLUSIVE with ALL the already established/passed categories listed above. Do NOT generate any categories that overlap, duplicate, or rephrase existing categories.\par
   \quad- MUTUAL EXCLUSIVITY AMONG NEW CATEGORIES: All generated complementary categories must be strictly MUTUALLY EXCLUSIVE with each other.\par
   \quad- COMPREHENSIVE COVERAGE: The complementary categories should cover the remaining meaningful aspects/topics of the annotation (root level) in this dataset / (deeper levels) under '\{parent\_path\}' that are NOT covered by the established categories.\par
   \quad- (root level) Do NOT generate sub-categories or nested levels. Only produce top-level category names.\par
   \quad- Strictly ban generic placeholder categories (do NOT generate "none", "other", "unspecified", "not applicable", "miscellaneous", "n/a").\par
   \quad- Strictly ban compound categories (such as "a and b").\par
   \quad- Be concise (1-3 words per category name in lowercase English).
\end{tcolorbox} 

\caption{The prompts used for generating complementary categories.}
\label{fig:comp tax prompt}
\end{figure*}

\section{Human Verification of Evaluation Agent}

We present an example execution trace of our offline QA agent in \Cref{tab:agent_example}. Using only \system's released synthetic table, the agent identifies the relevant reference population, inspects the available categories, and aggregates list-format requests across taxonomy levels without double counting. It then computes their proportion and compares it with the query's threshold. The displayed code and intermediate outputs allow human verification of the category mapping, denominator selection, numerical calculation, and final answer.

%==================================================

\section{Privacy Proof of \system}
\label{sec:appendix proof}

In this section, we present the complete mathematical proof for the privacy guarantee of \system (\Cref{alg:ship}). We consider the standard add-or-remove one DP in our work. We first provide a theorem about the relationship between DP and zCDP \citep{bun2016concentrateddifferentialprivacysimplifications}. 

\begin{theorem}[Approximate zCDP to DP]
\label{thm:zcdp_to_dp}
If a randomized mechanism $\mathcal{M}$ satisfies $\delta'$-approximate $\rho$-zCDP, then for any $\delta>\delta'$, $\mathcal{M}$ satisfies $(\varepsilon,\delta)$-DP, where
\begin{equation}
\begin{aligned}
\varepsilon = \inf_{\alpha>1}\Bigg\{ \alpha\rho&+\log\left(1-\frac{1}{\alpha}\right) \\
& -\frac{\log\left(\frac{\delta-\delta'}{1-\delta'}\right)+\log\alpha}{\alpha-1} \Bigg\}.
\end{aligned}
\end{equation}
\end{theorem}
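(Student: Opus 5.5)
The plan is to reduce to the pure-zCDP conversion on the conditioned distributions, and then pay for the bad events through an additive-slack calculation. Fix neighbors $\mathcal{D}\sim\mathcal{D}'$ and take the events $E,E'$ from \Cref{def:zcdp}. Write $P,Q$ for the conditional output laws of $\mathcal{M}(\mathcal{D})$ given $E$ and of $\mathcal{M}(\mathcal{D}')$ given $E'$. These satisfy $D_\alpha(P\Vert Q)\le\alpha\rho$ for every $\alpha>1$. Set $\bar\delta=\frac{\delta-\delta'}{1-\delta'}$; it lies in $(0,1]$ because $\delta'<\delta<1$.

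\textbf{Step 1 (pure conversion).} I would invoke the sharp Rényi-to-DP conversion of Canonne, Kamath and Steinke. It says that if $D_\alpha(P\Vert Q)\le\tau$ for some fixed $\alpha>1$, then for every measurable $\mathcal{S}$ we have $P(\mathcal{S})\le e^{\varepsilon}Q(\mathcal{S})+\delta_0$, where
\[
\delta_0=\frac{\exp\bigl((\alpha-1)(\tau-\varepsilon)\bigr)}{\alpha}\Bigl(1-\frac{1}{\alpha}\Bigr)^{\alpha-1}.
\]
The proof optimizes the hockey-stick divergence $\sup_{\mathcal{S}}\{P(\mathcal{S})-e^\varepsilon Q(\mathcal{S})\}$. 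Bound the likelihood ratio term pointwise: $(x-e^\varepsilon)_+\le c_\alpha x^\alpha e^{-(\alpha-1)\varepsilon}$ for the optimal constant $c_\alpha$. Then integrate against $Q$ and use $\mathbb{E}_Q[(dP/dQ)^\alpha]=e^{(\alpha-1)D_\alpha}$.

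Now take $\tau=\alpha\rho$, set $\delta_0=\bar\delta$, and solve for $\varepsilon$. Taking logs,
\[
\log\bar\delta=(\alpha-1)(\alpha\rho-\varepsilon)-\log\alpha+(\alpha-1)\log\Bigl(1-\tfrac1\alpha\Bigr),
\]
which gives
\[
\varepsilon=\alpha\rho+\log\Bigl(1-\tfrac1\alpha\Bigr)-\frac{\log\bar\delta+\log\alpha}{\alpha-1}.
\]
Since this holds for every $\alpha>1$, we may take the infimum over $\alpha$. This is exactly the expression in \Cref{thm:zcdp_to_dp}. If the infimum is not attained, an approximation argument over $\alpha$ handles it, and $(\varepsilon,\delta)$-DP is closed under that limit.

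\textbf{Step 2 (absorbing the bad events).} For any $\mathcal{S}$, decompose over $E$:
\[
\mathbb{P}[\mathcal{M}(\mathcal{D})\in\mathcal{S}]\le \mathbb{P}(E)P(\mathcal{S})+(1-\mathbb{P}(E)).
\]
The right side is a convex combination of $P(\mathcal{S})$ and $1$, so it is nonincreasing in $\mathbb{P}(E)$. Since $\mathbb{P}(E)\ge1-\delta'$, it is at most $(1-\delta')P(\mathcal{S})+\delta'$.

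On the other side, $Q(\mathcal{S})=\mathbb{P}[\mathcal{M}(\mathcal{D}')\in\mathcal{S}\cap E']/\mathbb{P}(E')$. This gives $Q(\mathcal{S})\le\mathbb{P}[\mathcal{M}(\mathcal{D}')\in\mathcal{S}]/(1-\delta')$.

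Chaining these with Step 1:
\[
\mathbb{P}[\mathcal{M}(\mathcal{D})\in\mathcal{S}]\le(1-\delta')\Bigl(\tfrac{e^\varepsilon}{1-\delta'}\mathbb{P}[\mathcal{M}(\mathcal{D}')\in\mathcal{S}]+\bar\delta\Bigr)+\delta'
= e^\varepsilon\,\mathbb{P}[\mathcal{M}(\mathcal{D}')\in\mathcal{S}]+\delta.
\]
The last equality holds because $(1-\delta')\bar\delta+\delta'=\delta$; this identity is precisely why $\bar\delta$ has that form. The symmetric direction follows in the same way from the bound on $D_\alpha(Q\Vert P)$.

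\textbf{Main obstacle.} The delicate part is Step 1: obtaining the sharp constant $\frac1\alpha(1-\frac1\alpha)^{\alpha-1}$, which is what produces the $\log(1-\frac1\alpha)$ term, rather than the cruder bound $e^{(\alpha-1)(\tau-\varepsilon)}$. I would either cite the Canonne–Kamath–Steinke result directly or reprove it. To reprove it, maximize $\phi(x)=(x-e^\varepsilon)/x^\alpha$ over $x>e^\varepsilon$; the maximizer is $x^*=\frac{\alpha}{\alpha-1}e^\varepsilon$. Evaluating $\phi(x^*)$ gives exactly that constant. Step 2 is routine, though I would check that the conditioning is applied to each side in the right direction.
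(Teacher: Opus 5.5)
Your proof is correct. The paper offers no competing argument: it states this theorem as an imported fact with a citation to Bun and Steinke (2016), so your derivation supplies what the paper leaves to the literature.

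Your two steps are these. First, you apply the sharp hockey-stick bound to the conditioned pair $(P,Q)$. Second, you absorb the bad events through $\mathbb{P}[\mathcal{M}(\mathcal{D})\in\mathcal{S}]\le(1-\delta')P(\mathcal{S})+\delta'$ and $(1-\delta')Q(\mathcal{S})\le\mathbb{P}[\mathcal{M}(\mathcal{D}')\in\mathcal{S}]$.

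This route gives two things the bare citation does not. It explains where $\bar{\delta}=(\delta-\delta')/(1-\delta')$ comes from, namely the identity $\delta'+(1-\delta')\bar{\delta}=\delta$. It also shows that the $\log(1-1/\alpha)$ and $-\log\alpha/(\alpha-1)$ terms come from the Canonne--Kamath--Steinke optimization of $(x-e^{\varepsilon})_+/x^{\alpha}$. They do not come from Bun--Steinke, whose privacy-loss tail argument gives only the cruder $e^{(\alpha-1)(\tau-\varepsilon)}$ bound you mention, that is, $\rho+2\sqrt{\rho\log(1/\delta)}$-type conversions. Your attribution of the sharp constant is therefore more accurate than the paper's.

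When you write it up, make three details explicit:
\begin{enumerate}
\item Finiteness of $D_\alpha(P\Vert Q)$ for $\alpha>1$ forces $P\ll Q$. This is what licenses writing $\sup_{\mathcal{S}}\{P(\mathcal{S})-e^{\varepsilon}Q(\mathcal{S})\}=\mathbb{E}_Q[(dP/dQ-e^{\varepsilon})_+]$.
\item $E'$ is an event on the mechanism's probability space, not a subset of outputs. Write $\mathbb{P}[\{\mathcal{M}(\mathcal{D}')\in\mathcal{S}\}\cap E']$ rather than $\mathcal{S}\cap E'$.
\item $\bar{\delta}<1$ holds strictly when $\delta<1$. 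For $\delta\ge 1$ the claim is trivial, which covers the appendix statement's omission of the condition $\delta<1$.
\end{enumerate}
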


Throughout the proof, the workload $\mathcal{W}$, dataset description $S_D$, maximum depth $d$, and models are fixed independently of the private corpus. We condition on data-independent model randomness, coupled consistently for records shared by neighboring datasets; all bounds hold uniformly over this randomness. Given the preceding DP outputs, category proposal, reclassification, and allocation are performed record-wise. Record-level proposals and assignments remain internal and are not released.

\begin{lemma}[Sensitivity of Partition Histograms]
\label{lem:app_sensitivity}
For any attribute $A$ and depth level $l$, fix the preceding DP outputs. Index histogram bins by parent--category pairs $(c_p,c)$ over a common category universe. Under add-or-remove-one-record adjacency, the joint count histogram $\mathbf{n}_{A,l}=(n_{c_p,c})$ across all active parent categories has global sensitivities
\[
\Delta_1 \le 1, \qquad \Delta_2 \le 1.
\]
The same bounds hold for the histogram under any individual parent category $c_p$.
\end{lemma}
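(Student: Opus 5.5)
Conditioned on the preceding DP outputs and on the coupled, data-independent model randomness, the set of active parent categories at level $l$, their candidate sets, and every prompt sent to $\mathcal{M}_{\text{cls}}$ or $\mathcal{M}_{\text{tax}}$ are fixed and identical for $\mathcal{D}$ and $\mathcal{D}'$. The approach is to show that the level-$l$ histogram is a sum of per-record contributions, each of which is either the zero vector or a single standard basis vector. The sensitivity bounds then follow immediately.

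The first step formalizes the record-wise structure. For any record $x$, the path it has been allocated at level $l-1$ is a deterministic function of $x$ and the fixed conditioning information. Proposal, reclassification and allocation are all performed record-wise. Hence $x$ determines at most one active parent $c_p(x)$ and exactly one label $y(x)$, possibly \texttt{none}, \texttt{other}, or nothing if $x$ sits under a terminal node. Its contribution is $v(x)=\mathbf{e}_{(c_p(x),y(x))}$, or $v(x)=\mathbf{0}$ if it reaches no active bin. Then $\mathbf{n}_{A,l}(\mathcal{D})=\sum_{x\in\mathcal{D}} v(x)$.

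This step needs care at one point. The coupling must guarantee that the records shared by $\mathcal{D}$ and $\mathcal{D}'$ receive identical $v(x)$. This holds because their classification depends only on $x$, the fixed DP outputs, and the coupled randomness, and never on other records. Reclassification in particular is conditioned only on $\mathcal{C}_{\text{open}}$ and $\mathcal{C}_{\text{comp}}$. These are DP outputs of the current step, and I treat them as fixed: the partition-selection histogram is analyzed before they are released, and the closed-set histogram after they are released.

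The second step applies the decomposition to neighbors. Under add-or-remove adjacency, write $\mathcal{D}'=\mathcal{D}\cup\{x^*\}$. Then $\mathbf{n}_{A,l}(\mathcal{D}')-\mathbf{n}_{A,l}(\mathcal{D})=v(x^*)$, which has $\ell_1$ and $\ell_2$ norm at most $1$. Taking the supremum over neighbors gives $\Delta_1\le 1$ and $\Delta_2\le 1$. Restricting to the coordinates with a fixed $c_p$ gives the per-parent bound, since a coordinate projection does not increase either norm.

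The only remaining subtlety is that bins newly created by $x^*$, meaning categories absent from $\mathcal{D}$, are handled by indexing over a common universe of categories. This is the standard setup for Gaussian partition selection, and the lemma explicitly assumes it. The main obstacle is therefore not the arithmetic. It is justifying the conditioning and coupling argument, namely that shared records keep identical assignments, and I would state this carefully as above.
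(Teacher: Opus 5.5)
Your proposal is correct and follows essentially the same argument as the paper: with the preceding DP outputs and the coupled model randomness fixed, each record is routed to at most one active parent and contributes to at most one bin, and shared records keep identical assignments, so neighboring histograms differ in at most a single entry by $\pm 1$. Your explicit decomposition $\mathbf{n}_{A,l}(\mathcal{D})=\sum_{x\in\mathcal{D}} v(x)$ and the coordinate-projection remark for the per-parent bound make the paper's argument more formal, and your point about fixing $\mathcal{C}_{\text{open}}\cup\mathcal{C}_{\text{comp}}$ before the closed-stage histogram matches the paper's closing remark about fixed classification contexts.
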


\begin{proof}
With the preceding DP outputs fixed, each record is routed to at most one active parent category $c_p$ and contributes to at most one category count under that parent. Adding or removing one record leaves the routing and classification of all shared records unchanged. Consequently, the histogram difference is either zero or a vector with one nonzero entry equal to $+1$ or $-1$, giving
\[
\begin{aligned}
&\|\mathbf{n}_{A,l}(\mathcal{D})-\mathbf{n}_{A,l}(\mathcal{D}')\|_1\le 1,\\
&\|\mathbf{n}_{A,l}(\mathcal{D})-\mathbf{n}_{A,l}(\mathcal{D}')\|_2\le 1.
\end{aligned}
\]
Records whose paths terminate at \texttt{other} or \texttt{none} contribute no further counts. The argument applies to both open category proposals and closed reclassification once their respective classification contexts are fixed.
\end{proof}

\begin{lemma}[Privacy of Single-Level Hybrid Taxonomy Building]
\label{lem:app_single_level}
For any attribute $A\in\mathcal{A}$ and depth level $l\in\{1,\dots,d\}$, condition on the preceding DP outputs. With $\rho_{\text{open}}=0.9\rho_{\text{step}}$ and $\rho_{\text{close}}=0.1\rho_{\text{step}}$, the thresholded open outputs and closed noisy histograms generated by \Cref{alg:hybrid discovery} across all active parent categories at this level jointly satisfy $\delta_{\text{step}}$-approximate $\rho_{\text{step}}$-zCDP. Internal record assignments are excluded from this guarantee.
\end{lemma}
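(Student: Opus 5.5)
The plan is to treat one level $l$ of \Cref{alg:hybrid discovery full} for attribute $A$ as the adaptive composition of two mechanisms. The first, $\mathcal{M}_{\text{open}}$, computes per-parent open-proposal histograms and applies Gaussian thresholding. The second, $\mathcal{M}_{\text{close}}$, runs conditioned on the released set $\mathcal{C}_{\text{open}}$ and the data-independent output of $\mathcal{M}_{\text{tax}}$. It computes the reclassified histograms over $\mathcal{C}_{\text{open}}\cup\mathcal{C}_{\text{comp}}$, adds $\mathcal N(0,\sigma_{\text{close}}^2)$ noise, and thresholds at $\tau_{\text{close}}$. Throughout, I condition on the preceding DP outputs, which fixes the set of active parents and the routing of every record, and on the coupled model randomness. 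I then bound each stage separately and compose them.

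\textbf{Open stage.} By \Cref{lem:app_sensitivity}, the joint histogram $\mathbf{n}_{A,l}$ indexed by pairs $(c_p,c)$ changes in at most one coordinate by $\pm1$ under add/remove-one. The open stage is therefore a single instance of Gaussian partition selection on a sensitivity-one histogram: all parents at the level are handled together, and each record contributes one item. I would invoke the standard analysis of the Gaussian thresholding mechanism \citep{desfontaines2021differentiallyprivatepartitionselection}. It splits neighbors into two cases.
\begin{itemize}
\item \emph{Shared bins.} On bins present in both datasets, the noisy counts form a Gaussian mechanism with $\ell_2$-sensitivity $1$ and variance $\sigma_{\text{open}}^2=1/(2\rho_{\text{open}})$, which is $\rho_{\text{open}}$-zCDP. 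Thresholding is post-processing.
\item \emph{Newly created bin.} If the differing record creates a new bin, that bin has true count $1$ and is released only if $1+Z\ge\tau_{\text{open}}=1+\sigma_{\text{open}}\Phi^{-1}(1-\delta_{\text{step}})$. This event has probability exactly $\delta_{\text{step}}$.
\end{itemize}
Take $E$ to be the complement of that event. On the other dataset, take $E'$ to be the full space. Conditioned on these events, the two output distributions agree except through the shared-bin Gaussian release. This gives $\delta_{\text{step}}$-approximate $\rho_{\text{open}}$-zCDP in the sense of \Cref{def:zcdp}.

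\textbf{Closed stage.} Given $\mathcal{C}_{\text{open}}$ and $\mathcal{C}_{\text{comp}}$, the candidate domain is fixed. Reclassification is record-wise, so \Cref{lem:app_sensitivity} applies again with sensitivity $1$. The domain is now data-independent, so no $\delta$ is needed. The Gaussian noise with $\sigma_{\text{close}}^2=1/(2\rho_{\text{close}})$ gives $\rho_{\text{close}}$-zCDP for the noisy histograms. Thresholding and mapping to \texttt{other} are post-processing.

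\textbf{Composition and main obstacle.} Adaptive composition of approximate zCDP adds the $\rho$'s and the $\delta$'s. This yields $(\rho_{\text{open}}+\rho_{\text{close}})=\rho_{\text{step}}$ and $\delta_{\text{step}}+0$, as claimed. The internal assignments $\{y_i\}$ are not outputs, so they are excluded.

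I expect the main obstacle to be the open stage, in two respects.
\begin{itemize}
\item \emph{Defining the events.} $E$ and $E'$ must be stated carefully, because the neighboring output spaces differ in which bins can appear. The conditioned distributions must be shown to be genuinely comparable, rather than merely appealing to ``the standard argument.''
\item \emph{Parallel composition across parents.} I must check that treating all parents jointly costs only one $\delta_{\text{step}}$ and not one per parent. This rests on the single-bin sensitivity of the joint histogram, and I would make that dependence explicit.
\end{itemize}
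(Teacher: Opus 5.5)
Your proposal is correct and follows essentially the same route as the paper. The open stage is Gaussian partition selection on the joint $(c_p,c)$-indexed histogram, with a one-sided event that the single newly created bin (true count $1$) is not released; this event has probability $1-\delta_{\text{step}}$ and leaves the independent noises on the common support untouched. This is followed by a $\rho_{\text{close}}$-zCDP Gaussian measurement over the fixed candidate domain $\mathcal{C}_{\text{open}}\cup\mathcal{C}_{\text{comp}}$ and adaptive composition of approximate zCDP. The paper resolves your parallel-composition concern the same way, via the single-bin sensitivity of \Cref{lem:app_sensitivity}; it differs only in citing the Gaussian partition-selection theorem of Swanberg et al.\ rather than Desfontaines et al.
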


\begin{proof}
We analyze the open and closed stages jointly across all active parent categories.

In the open stage, histogram bins are indexed by parent--category pairs. For neighboring datasets $\mathcal{D}'=\mathcal{D}\cup\{x\}$, the added record can introduce at most one previously unobserved pair, whose count is one. Let $E'$ be the event that no such pair is released on $\mathcal{D}'$, and let $E$ be the certain event on $\mathcal{D}$. With $\sigma_{\text{open}}=1/\sqrt{2\rho_{\text{open}}}$ and $\tau_{\text{open}}=1+\sigma_{\text{open}}\Phi^{-1}(1-\delta_{\text{step}})$, an introduced pair survives with probability
\[
\Pr[1+Z\ge\tau_{\text{open}}]=\delta_{\text{step}},
\quad Z\sim\mathcal{N}(0,\sigma_{\text{open}}^2).
\]
Thus $\Pr[E']\ge 1-\delta_{\text{step}}$. Conditioning on $E'$ leaves the independent Gaussian noises on the common support unchanged. By \Cref{lem:app_sensitivity}, the conditioned output distributions $P$ and $Q$ are post-processing of Gaussian measurements with sensitivity at most one, so, for every $\alpha>1$,
\[
\max\{D_\alpha(P\Vert Q),D_\alpha(Q\Vert P)\}
\le \frac{\alpha}{2\sigma_{\text{open}}^2}
= \alpha\rho_{\text{open}}.
\]
Therefore, open discovery satisfies $\delta_{\text{step}}$-approximate $\rho_{\text{open}}$-zCDP. This is the Gaussian partition-selection guarantee of \citet[Theorem~13]{swanberg2023dpsipssimplerscalablemechanism}, specialized to one category per record.

Complementary proposal and semantic deduplication depend only on the surviving DP categories and public information, and therefore incur no additional privacy cost. Conditioned on these outputs, each parent has a fixed candidate domain $\mathcal{C}_{\text{open}}\cup\mathcal{C}_{\text{comp}}$. Reclassification remains record-wise, so the joint closed histogram has $L_2$ sensitivity at most one by \Cref{lem:app_sensitivity}. Measuring every candidate count, including zero counts, with independent Gaussian noise of scale $\sigma_{\text{close}}=1/\sqrt{2\rho_{\text{close}}}$ satisfies $\rho_{\text{close}}$-zCDP. Thresholding at $\tau_{\text{close}}$ and updating $\Omega_A$ are post-processing of the DP outputs.

By adaptive composition of approximate zCDP \citep[Lemma~8.2]{bun2016concentrateddifferentialprivacysimplifications}, the joint outputs satisfy $\delta_{\text{step}}$-approximate $(\rho_{\text{open}}+\rho_{\text{close}})$-zCDP, where
\[
\rho_{\text{open}}+\rho_{\text{close}}=\rho_{\text{step}}.
\]
This bound covers all active parents together: adding or removing one record affects at most one parent--category pair, so neither privacy parameter scales with the number of parents.
\end{proof}

\begin{proof}[Proof of \Cref{thm:ship_privacy}]
For the analysis, let $H$ denote the transcript containing all thresholded open-discovery outputs, closed Gaussian measurements, and their post-processing during taxonomy construction. Unfiltered record-level proposals and internal assignments are excluded from $H$.

By \Cref{lem:app_single_level} and adaptive composition across all attributes and hierarchy levels, $H$ satisfies $\delta_{\text{tax}}$-approximate $\rho_{\text{tax}}$-zCDP, using the additive composition bounds
\[
\sum_{A\in\mathcal{A}}\sum_{l=1}^{d}\rho_{\text{step}}=\rho_{\text{tax}},
\quad
\sum_{A\in\mathcal{A}}\sum_{l=1}^{d}\delta_{\text{step}}=\delta_{\text{tax}}.
\]
Early termination of record paths does not increase these bounds. The released taxonomy $\mathcal{T}=\{A:\Omega_A\}_{A\in\mathcal{A}}$ is post-processing of $H$.

For any fixed transcript $H$, each record's joint annotation $(y_i^{(A)})_{A\in\mathcal{A}}$ depends only on that record, $H$, and the fixed public inputs and model randomness. Adding or removing one textual record therefore adds or removes one row of the internal annotated table $\{(y_i^{(A)})_{A\in\mathcal{A}}\}_{i=1}^{N}$, leaving all shared rows unchanged. The privacy guarantee of AIM \citep{mckenna2024aimadaptiveiterativemechanism} consequently applies under the same adjacency relation, giving $\rho_{\text{aim}}$-zCDP for its synthesis output conditional on $H$.

Adaptive composition with taxonomy construction yields $\delta_{\text{tax}}$-approximate $\rho_{\text{total}}$-zCDP, where
\[
\rho_{\text{total}}=\rho_{\text{tax}}+\rho_{\text{aim}}.
\]
The released pair $(\mathcal{T},\widetilde{\mathcal{D}})$ is post-processing of the analyzed transcript and AIM output. For any target $\delta_{\text{tax}}<\delta<1$, define $\bar{\delta}=(\delta-\delta_{\text{tax}})/(1-\delta_{\text{tax}})$. Applying \Cref{thm:zcdp_to_dp} gives $(\varepsilon,\delta)$-DP with
\begin{equation*}
\begin{aligned}
\varepsilon=\inf_{\alpha>1}\Bigg\{
\alpha\rho_{\text{total}}&+\log\left(1-\frac{1}{\alpha}\right)\\
&-\frac{\log\bar{\delta}+\log\alpha}{\alpha-1}
\Bigg\}.
\end{aligned}
\end{equation*}
This establishes the guarantee in \Cref{thm:ship_privacy}.
\end{proof}

\section{Computational Cost Analysis}
\label{app:cost}

In this section, we provide an analysis of the computational cost of different insight extraction methods, focusing primarily on LLM API token consumption. Let $N$ denote the number of records, $L$ the average number of tokens per record, $L_p$ the average prompt overhead for a record-level LLM call, and $d$ the hierarchy depth. Since multiple attributes are processed jointly in our implementation, we do not introduce an additional multiplicative factor for the number of attributes. We use $T_{\mathrm{tax}}$ to denote the token cost of taxonomy-level LLM operations. These estimates characterize the dominant scaling behavior rather than exact API billing, since prompt and output lengths vary across datasets and intermediate taxonomies.

\noindent \textbf{\system.} At each hierarchy level, \system first performs open-set category discovery over the records assigned to all active nodes. Since these nodes partition the dataset, this requires one complete pass over the $N$ records. Records whose discovered categories do not survive partition selection are subsequently re-classified over the supplemented candidate domain, requiring at most one additional pass. Therefore, under the fixed-depth implementation used in our experiments,
$$
\begin{aligned}
T_{\system} &\geq dN(L+L_p) + T_{\mathrm{tax}}, \\
T_{\system} &\leq 2dN(L+L_p) + T_{\mathrm{tax}}.
\end{aligned}
$$
The lower bound corresponds to the case where no re-classification is needed, while the upper bound assumes that all records require a second pass at every level. In practice, the cost lies between these two extremes.

\noindent \textbf{Flat-Open / Flat-Close.}
Both flat structured baselines require only a single record-level classification pass. Although multiple attributes are extracted, they are processed jointly within the same LLM call in our implementation. Their dominant token cost is therefore
$$
\begin{aligned}
T_{\text{Flat-Close}} &\approx N(L+L_p) + T^*_{\text{tax}}\\
T_{\text{Flat-Opne}} &\approx N(L+L_p).
\end{aligned}
$$
For Flat-Close, $T^*_{\mathrm{tax}}$ mainly corresponds to constructing the predefined taxonomy, whereas for Flat-Open it includes category consolidation after open-set discovery. Subsequent DP aggregation or tabular synthesis does not incur additional LLM token cost.

\noindent \textbf{Urania.}
Urania performs a single LLM-based facet and keyword extraction pass over the corpus, followed by embedding computation and hierarchical clustering. Its record-level LLM token cost is therefore approximately
$$
T_{\text{Urania}} \approx N(L+L_p) + T_{\text{sum}},
$$
where $T_{\mathrm{sum}}$ denotes the cost of generating cluster summaries across the hierarchy. Unlike \system, Urania does not repeatedly process the raw records at each hierarchy level: embeddings are computed once, the records are hierarchically clustered for $d$ levels, and summaries are generated bottom-up from the resulting clusters. If $K_\ell$ denotes the number of clusters at level $\ell$ and $L_{\mathrm{sum}}^{(\ell)}$ the average token cost of summarizing one cluster, then
$$
T_{\text{sum}} \approx \sum_{\ell=1}^{d} K_\ell L_{\text{sum}}^{(\ell)}.
$$
In addition to its LLM token cost, Urania incurs non-LLM computation for embedding inference and hierarchical clustering. We report this cost separately rather than converting it into API-token equivalents, since it depends on the embedding model, hardware, and clustering implementation.

\noindent \textbf{Discussion.}
Overall, \system incurs higher LLM token cost than the flat structured baselines and more record-level API processing than Urania, with its cost scaling linearly with the hierarchy depth $d$. This additional computation reflects a trade-off between extraction cost and analytical utility: \system repeatedly processes the corpus to construct finer-grained hierarchical domains and recover information not captured by a single flat pass. As shown in our query-answering evaluation, this richer representation is associated with substantially stronger utility on both targeted and open-ended analytical queries. We therefore view the increased computational cost as an important limitation and a practical trade-off of \system, rather than a cost shared by all insight extraction approaches.

\begin{figure*}[!t]
\centering
\begin{tcolorbox}[
    colback=gray!10,          % 10% gray background
    colframe=gray!40,         % Subtle border
    arc=2mm,                  % Rounded corners (set to 0mm for square)
    boxrule=0.6pt,            % Border line width
    width=\textwidth,         % Use \textwidth for double-column spans
    left=8pt, right=8pt, top=8pt, bottom=8pt
]
\small
You are an expert data taxonomy cleanup assistant.
(root level) We have a set of newly discovered candidate categories for the annotation attribute "\{annotation\_name\}" (Description: \{description\}):\par
(deeper levels) We have a set of newly discovered subcategories under the parent category "\{parent\_path\}" for the annotation attribute "\{annotation\_name\}" (Description: \{description\}):\par\medskip

(optional, variant A) Granularity Reference Example: \{hierarchy\_example\}\par
(The reference example demonstrates the expected level of granularity/abstraction of canonical categories at each level. You must align the granularity of the merged canonical categories with this reference example.)\par
(optional, variant B) Reference Categories (granularity reference examples):\par
   \quad- \{reference\_category\_1\}\par
   \quad- \{reference\_category\_2\}\par
   \quad- ...\par
(The reference examples demonstrate the expected level of granularity/abstraction of canonical categories. You must align the granularity of the merged canonical categories with these reference examples.)\par\medskip

Newly Discovered Categories to Deduplicate and Merge:\par
   \quad- \{discovered\_category\_1\}\par
   \quad- \{discovered\_category\_2\}\par
   \quad- ...\par\medskip

Your task:\par
Analyze the newly discovered candidate categories, deduplicate them, and merge categories so that the resulting canonical categories are strictly MUTUALLY EXCLUSIVE, well-consolidated, and match the target granularity demonstrated by the reference examples.\par
For any cluster of semantically identical categories, near-synonyms, phrasing variants, or duplicate terms describing the same concept, choose or create a single canonical, concise category name that captures their shared core concept, and map all input categories in that cluster to that canonical name.\par\medskip

RULES:\par
1. Every input category in the "Newly Discovered Categories" list MUST be mapped to a canonical category.\par
2. (root level) Canonical category names should be clear, concise (1-4 words in lowercase).\par
2. (deeper levels) Canonical category names should be clear, concise (1-4 words in lowercase). Each merged category must strictly remain a valid sub-category of the parent category "\{parent\_path\}".\par
3. STRICT MUTUAL EXCLUSIVITY (DEDUPLICATE SYNONYMS \& DUPLICATE PHRASINGS):\par
   \quad- All canonical categories after merging MUST be strictly MUTUALLY EXCLUSIVE with each other. No two canonical categories can have duplicate or overlapping meaning at the same level.\par
   \quad- If multiple discovered categories are synonyms, near-synonyms, or slight phrasing variants of the same concept, you MUST merge them into a single canonical category.\par
4. GRANULARITY PRESERVATION (MERGING CANNOT CHANGE THE GRANULARITY):\par
   \quad- The reference examples demonstrate the expected level of granularity and abstraction for canonical categories.\par
   \quad- You MUST strictly preserve and align the granularity of the merged canonical categories with the reference examples.\par
   \quad- CRITICAL REQUIREMENT: MERGING CANNOT CHANGE THE GRANULARITY. Merging is strictly for removing redundant duplicates, near-synonyms, phrasing variations, and identical concepts at the SAME semantic level of abstraction. You MUST NOT collapse distinct sub-disciplines, distinct subfields, or valid sibling categories (for example, merging "web development", "database systems", "software testing", or "algorithms" into an overly broad umbrella category like "software engineering" is STRICTLY PROHIBITED because it destroys the category granularity).\par
   \quad- Valid, distinct sibling concepts must remain separate distinct canonical categories.\par
5. CONCISE CONSOLIDATION:\par
   \quad- The resulting set of canonical categories should be clean and well-consolidated without redundant duplicates.\par
   \quad- Consolidate minor wording variations or duplicate phrasings into clean canonical names, but never over-merge distinct sibling sub-topics.\par
6. "none" must map to "none" and not be merged with specific domain concepts.\par
7. All canonical category names must be in lowercase English.\par
8. Return the result ONLY as a valid JSON object where keys are the original input categories and values are the corresponding canonical/merged categories. For example:\par
  \quad\{\par
  \quad\quad  "discovered\_cat\_1": "discovered\_cat\_1",\par
  \quad\quad  "discovered\_cat\_2": "merged\_canonical\_name",\par
  \quad\quad  "discovered\_cat\_3": "merged\_canonical\_name"\par
  \quad\}\par
9. Do not write explanations, code block markers (like ```json), or any extra text.\par\medskip

Your JSON output (invalid JSON will be rejected):
\end{tcolorbox} 

\caption{The prompts used for de-duplicating proposed and survived categories.}
\label{fig:merge prompt}
\end{figure*}

% \begin{table}[!t]
% \centering
% \footnotesize
% \caption{Average category counts and allocated records across attributes on WildChat ($N=16{,}703$).}
% \label{tab:wildchat_category_breakdown_compact}
% \renewcommand{\arraystretch}{1.15}
% \setlength{\tabcolsep}{4.5pt}
% \begin{tabular}{cc ccc}
% \toprule
% \textbf{$\varepsilon$} & \textbf{Level} & \textbf{Open-Set} & \textbf{Close-Set} & \textbf{Other \& None} \\
% \midrule
% \multirow{3}{*}{$1.0$} 
%   & L0 & 13.5 (83.9\%) & 4.5 (2.5\%) & 1.8 (13.7\%) \\
%   & L1 & 35.8 (80.2\%) & 43.0 (8.3\%) & 23.0 (11.5\%) \\
%   & L2 & 29.8 (46.7\%) & 152.0 (37.6\%) & 74.0 (15.7\%) \\
% \midrule
% \multirow{3}{*}{$5.0$} 
%   & L0 & 16.3 (86.5\%) & 4.3 (1.0\%) & 1.8 (12.5\%) \\
%   & L1 & 52.8 (89.3\%) & 37.3 (5.8\%) & 24.8 (5.0\%) \\
%   & L2 & 55.0 (55.5\%) & 137.5 (32.1\%) & 93.0 (12.3\%) \\
% \midrule
% \multirow{3}{*}{$10.0$} 
%   & L0 & 20.8 (86.3\%) & 2.8 (0.6\%) & 2.3 (13.1\%) \\
%   & L1 & 86.3 (86.1\%) & 30.3 (4.6\%) & 29.5 (9.4\%) \\
%   & L2 & 70.8 (51.7\%) & 147.3 (33.0\%) & 119.8 (15.4\%) \\
% \bottomrule
% \end{tabular}
% \end{table}

\begin{figure*}[!t]
\centering
\begin{tcolorbox}[
    colback=gray!10,          % 10% gray background
    colframe=gray!40,         % Subtle border
    arc=2mm,                  % Rounded corners (set to 0mm for square)
    boxrule=0.6pt,            % Border line width
    width=\textwidth,         % Use \textwidth for double-column spans
    left=8pt, right=8pt, top=8pt, bottom=8pt
]
\small
(\textbf{Yes/No}) SCORING RULE:\par
- Score 5 (Match): The candidate answer gives the same binary verdict as Ground Truth (both are "Yes" or both are "No").\par
- Score 1 (Mismatch): The candidate answer contradicts Ground Truth (one says "Yes" while the other says "No"), hedges ambiguously, or fails to provide a clear verdict. \par\medskip 

(\textbf{Comparison}) SCORING RUBRIC (1 to 5):\par
- [5] Perfect / Correct Winning Concept: Candidate correctly identifies the identical winning concept/category as Ground Truth. Natural language phrasing variations, synonyms, and expanded descriptive forms (e.g. "text transformation" vs "transforming and modifying existing text") are fully credited.\par
- [4] Correct Winner with Minor Imprecision: Candidate correctly identifies the winning concept, but its comparative margin or secondary supporting numbers have slight discrepancies.\par
- [3] Ambiguous / Inconclusive Comparison: Candidate describes both concepts without declaring a clear winner, falsely asserts they are equal/tied, or provides vague comparative evidence.\par
- [2] Inverted Comparative Direction: Candidate incorrectly declares the losing concept as the winner (e.g. claiming entity B exceeds entity A when Ground Truth establishes A exceeds B).\par
- [1] Completely Incorrect / Irrelevant: Candidate identifies an entity completely outside the comparison, hallucinates findings, or failed to answer.\par\medskip  

(\textbf{Numerical}) SCORING RUBRIC (1 to 5):\par
- [5] Perfect / Highly Accurate: Candidate value has absolute error $\leq$ 0.05 (5 percentage points) OR relative error $\leq$ 10\% compared to Ground Truth. Correctly identified the target subset and denominator.\par
- [4] Minor Numerical Deviation: Candidate value has absolute error 0.05-0.10 OR relative error 10\%-20\%. Core calculation is sound with minor discrepancy due to sample noise or approximation.\par
- [3] Moderate Numerical Deviation: Candidate value has absolute error 0.10-0.25 OR relative error 20\%-50\%. Noticeable numerical discrepancy or partially imprecise subset filtering.\par
- [2] Severe Distortion: Candidate value has absolute error $>$ 0.25 OR relative error $>$ 50\%, or severely misidentified the denominator/reference population.\par
- [1] Completely Incorrect: Numerical value is completely wrong, off by orders of magnitude, unparseable, or missing. \par\medskip

(\textbf{Open-ended statements}) Rubric:\par
Focus primarily on semantic alignment; the numerical error bounds in the rubric below are provided for reference.\par
- [5] perfect alignment: The statement object is supported by a logically sound descriptive conclusion, and its statistical result (if given) maintains a strict error bound (EITHER a relative error of $<$ 10\% OR an absolute error of $<$ 5\%, depending on the statement's context).\par
- [4] minor deviation: The descriptive conclusion preserves the primary narrative takeaway, while the paired statistical result (if given) contains a slight error (EITHER a relative error between 10\% and 20\% OR an absolute error between 5\% and 10\%, chosen based on context).\par
- [3] middle deviation: The descriptive conclusion blends true and false information that slightly alters the narrative, alongside a noticeable statistical result (if given) showing EITHER a relative error between 20\% and 50\% OR an absolute error between 10\% and 25\%, evaluated based on the statement's context.\par
- [2] severe distortion: The statement object is related to the source data but is fundamentally misrepresented by a severely distorted descriptive conclusion, or the statistical result (if given) introduces a massive error (EITHER a relative error of $>$ 50\% OR an absolute error of $>$ 25\%, evaluated based on the statement's context).\par
- [1] complete absence / insufficient information: The statement object is entirely absent from or unrelated to the source data, OR the available data lacks sufficient information to validate it (e.g., missing critical context such as asserting 'undergraduate students prefer X' when data provides no information about student types as the condition, or core claims cannot be found such as asserting 'Python web development' when data only shows 'general programming').
\end{tcolorbox} 

\caption{The prompts used for scoring the query answers or the statements.}
\label{fig:score prompt}
\end{figure*}

\section{Prompts} 
\label{app:prompt}

This paper involves many LLM API calls during execution, which requires proper prompts. Here, we provide four prompts that are used in \system: 
\begin{enumerate}[leftmargin=*, label=\textbullet]
    \item The prompt used to propose categories for each record is provided in \Cref{fig:propose prompt}. These categories will go through partition selection after proposal.
    \item \Cref{fig:reclassify prompt} provides the prompt used to re-classify the text given generated complementary categories and proposed categories that pass the partition selection.
    \item The prompt used to generate complementary categories is provided in \Cref{fig:comp tax prompt}. 
    \item The prompt used to merge duplicate categories is provided in \Cref{fig:merge prompt}. This step is conducted on those categories passing partition selection.
\end{enumerate}
Moreover, we provide the prompts used in QA scoring in \Cref{fig:score prompt}.

\section{Generative AI Usage}

We mainly use the Gemini family for coding and polishing writing, including main sections and appendix. The authors reviewed and verified the AI-assisted material and take full responsibility for the work.

\end{document}